\definecolor{link-color}{RGB}{34, 180, 84} 
\definecolor{cite-color}{RGB}{46, 134, 190}
\definecolor{light-gray}{gray}{0.92}
\newcommand{\model}[0]{{Concrete-SM}}
\newcommand{\norm}[1]{\left\lVert#1\right\rVert}
\newtheorem{lemma}{Lemma}
\newtheorem{theorem}{Theorem}
\newtheorem{example}{Example}
\newtheorem{proposition}{Proposition}
\newtheorem{definition}{Definition}
\newcommand{\be}{\begin{equation}}
	\newcommand{\ee}{\end{equation}}
\definecolor{Gray}{gray}{0.85}
\definecolor{LightCyan}{rgb}{0.88,1,1}
\def\@onedot{\ifx\@let@token.\else.\null\fi\xspace}
\DeclareRobustCommand\onedot{\futurelet\@let@token\@onedot}
\newcommand{\x}{\mathbf{x}}
\newcommand{\y}{\mathbf{y}}
\newcommand{\tx}{\tilde {\mathbf{x}}}
\def\ie{\emph{i.e}\onedot}
\def\vs{\emph{vs}\onedot}
\def\iid{i.i.d\onedot}
\definecolor{blue1}{RGB}{0,128,255}
\definecolor{blue3}{RGB}{0,0,128}
\definecolor{darkpastelgreen}{rgb}{0.01, 0.75, 0.24}
\definecolor{cerulean}{rgb}{0.0, 0.48, 0.65}
\def\ie{\emph{i.e}\onedot}
\def\vs{\emph{vs}\onedot}
\def\iid{i.i.d\onedot}
\definecolor{darkgreen}{rgb}{0,0.6,0}
\def\eqref#1{equation~\ref{#1}}
\def\1{\bm{1}}
\def\rve{{\mathbf{e}}}
\def\rvv{{\mathbf{v}}}
\def\rvx{{\mathbf{x}}}
\def\vone{{\bm{1}}}
\def\vc{{\bm{c}}}
\def\vs{{\bm{s}}}
\DeclareMathAlphabet{\mathsfit}{\encodingdefault}{\sfdefault}{m}{sl}
\SetMathAlphabet{\mathsfit}{bold}{\encodingdefault}{\sfdefault}{bx}{n}
\def\gJ{{\mathcal{J}}}
\def\gN{{\mathcal{N}}}
\def\gP{{\mathcal{P}}}
\newcommand{\supp}{\rm{supp}}
\newcommand{\pdata}{p_{\rm{data}}}
\newcommand{\cdata}{\vc_{\pdata}}
\newcommand{\R}{\mathbb{R}}
\DeclareMathOperator*{\argmin}{arg\,min}
\newcommand{\bbR}{{\mathbb{R}}}
\newcommand{\bbE}{{\mathbb{E}}}
\newcommand{\cJ}{{\mathcal{J}}}
\newcommand{\cL}{{\mathcal{L}}}
\newcommand{\cN}{{\mathcal{N}}}
\newcommand{\cX}{{\mathcal{X}}}
\newcommand{\cG}{{\mathcal{G}}}
\title{Concrete Score Matching: Generalized Score Matching for Discrete Data}
\author{%
  Chenlin Meng\thanks{Joint first author}\\
  Stanford University\\
  \texttt{chenlin@cs.stanford.edu} \\
   \And
   Kristy Choi$^{*}$\\
   Stanford University \\
   \texttt{kechoi@cs.stanford.edu} \\
   \AND
   Jiaming Song \\
   NVIDIA \\
   \texttt{jiamings@nvidia.com} \\
   \And
   Stefano Ermon \\
   Stanford University, CZ Biohub \\
   \texttt{ermon@cs.stanford.edu} \\
}
\begin{document}

\maketitle
\begin{abstract}

\looseness=-1
Representing probability distributions by the gradient of their 
density functions 
has proven effective in modeling a wide range of continuous data modalities.
However, this representation is not applicable in discrete domains where the gradient 
is undefined.
To this end, we propose an analogous score function called the ``Concrete score'', a generalization of the (Stein) score for discrete settings.
Given a predefined neighborhood structure,
the Concrete score of any input is defined by the rate of change of the probabilities with respect to local directional changes of the input.
This formulation allows us to recover the (Stein) score in continuous domains
when measuring such changes by the Euclidean distance, while using the Manhattan distance leads to our novel score function in discrete domains.
Finally, we introduce a new framework to learn such scores from samples called Concrete Score Matching (CSM), and propose an efficient training objective to scale our approach to high dimensions.
Empirically, we demonstrate the efficacy of CSM on density estimation tasks on a mixture of synthetic, tabular, and high-dimensional image datasets, and demonstrate that it performs favorably relative to existing baselines for modeling discrete data.

\end{abstract}

\section{Introduction}
When estimating a statistical model, the representation of the underlying probability distribution has profound implications on the downstream modeling task. 
Likelihood-based model families such as Variational Autoencoders (VAEs)
\cite{kingma2013auto,rezende2014stochastic,poli2022self,higgins2016beta,van2017neural}, autoregressive models~\cite{van2016conditional,salimans2017pixelcnn++,meng2021improved,song2021accelerating,germain2015made}, normalizing flows \cite{rezende2015variational,kingma2016improved,papamakarios2017masked,meng2022butterflyflow,meng2020gaussianization,kingma2018glow,ho2019flow++,song2019mintnet}, and diffusion models \cite{sohl2015deep,ho2020denoising,sinha2021d2c,song2021denoising,meng2021sdedit,song2020score,su2022dual} are forced to either approximate the intractable normalizing constant or utilize restrictive model architectures; implicit models \cite{goodfellow2014generative,arjovsky2017wasserstein,gulrajani2017improved,karras2019style} try to capture the underlying sampling process by relying on unstable adversarial training procedures. 
On the other hand, representing the distribution as the gradient of the log probability density function---also known as the (Stein) score---allows us to circumvent such issues.
This is key to score matching's success on a wide range of continuous data modalities \cite{hyvarinen2005estimation,vincent2011connection,song2020sliced,meng2021estimating,meng2020autoregressive,choi2022density}.
In fact, its recent resurgence has led to significant advances in machine learning applications of density estimation, such as image generation \cite{song2019generative,song2020score,meng2021sdedit,ho2020denoising} and audio synthesis \cite{chen2020wavegrad,kong2020diffwave}, among others. 

However, score matching approaches that rely on modeling the gradient of the data distribution are inherently designed for continuous data; such methods hinge on the \emph{existence} of the gradient,
which is undefined for discrete domains \cite{vincent2011connection}. 
Given the 
ubiquitous nature of structured, discrete data in our world today such as graphs, text, genomic sequences, images, we
desire an approach that would allow us to expand the successes of score-based generative models into discrete domains.

\looseness=-1
To address this challenge, we first introduce the Concrete score: a generalization of the (Stein) score that is amenable to both \underline{con}tinuous and dis\underline{crete} data types\footnote{We borrow the terminology from ``concrete mathematics''~\cite{graham1989concrete}.}. Given a predefined neighborhood structure, the Concrete score leverages structural information in the data to construct \emph{surrogate gradient information} about the discrete space by considering the similarity between two neighboring examples. 
More precisely, the Concrete score of any input is defined by the rate of change of the probabilities with respect to directional changes of the input.
In direct analogy to the continuous (Stein) score, the intuition is that the Concrete score should remain small when two examples are ``close'' to one another; when the two examples are very different, the Concrete score will be large.
We find that measuring such changes by the Euclidean distance recovers the (Stein) score in continuous domains, while using the Manhattan distance leads to our novel score function in discrete domains.

Using this definition of the Concrete score, we then introduce a framework to learn such scores from samples called Concrete Score Matching (CSM). To successfully scale our method to high dimensions, we also propose an efficient training objective as well as several variations of our approach that improve performance in practice.
Empirically, we demonstrate the efficacy of CSM on a wide range of density estimation tasks on a mixture of synthetic, tabular, and high-dimensional image datasets, and demonstrate that it performs favorably relative to existing baselines for modeling discrete data.

The contributions of our work can be summarized as follows:
\begin{enumerate}
    \item We propose the Concrete score, a generalization of the (Stein) score to discrete domains. We construct the Concrete score such that it leverages structural information in the data to capture surrogate ``gradient'' information over discrete spaces.
    \item We introduce a novel score matching framework for estimating such scores from samples called Concrete Score Matching (CSM). We then outline several connections between CSM and existing (continuous) score matching techniques, such denoising score matching (DSM). 
    \item We propose efficient learning objectives that allow our method to scale gracefully to high-dimensional datasets, and show how to port over the recent successes in continuous score matching approaches to discrete domains.
\end{enumerate}

\section{Preliminaries}
\label{sec:cont_sm}
\looseness=-1
Let $\pdata(\rvx)$ be the unknown data distribution over $\mathcal{X} \in \mathbb{R}^D$,
for which we have access to \iid samples $\{\rvx_{i}\}_{i=1}^N \sim \pdata(\rvx)$. The (Stein) score of $\pdata(\rvx)$ is the first order derivative of the log data density function $\vs(\rvx)  = \nabla_\rvx \log p(\rvx)$. 
The goal of score matching is to learn an unnormalized density $q_\theta(\rvx)$ indexed by $\theta \in \Theta$ such that the estimated score function $\vs_\theta(\rvx) = \nabla_\rvx \log q_{\theta}(\rvx)$ is close to $\vs(\rvx)$.

In practice, we leverage a \emph{score network} $\vs_\theta: \mathbb{R}^D \rightarrow \bbR^D$ that is trained to minimize the Fisher divergence between $q_\theta(\rvx)$ and $\pdata(\rvx)$ \cite{song2020sliced,song2019generative}.
Although the original objective is intractable to compute due to its dependence on the ground truth data scores $\vs(\rvx)$, we can leverage integration by parts \cite{hyvarinen2005estimation} to simplify the objective as follows:
\begin{align}
\label{eq:sm_objective}
    \cJ_{SM}(\theta) 
    &= \bbE_{\pdata(\rvx)} \bigg[ \frac{1}{2} \Vert \vs_\theta(\rvx) \Vert_2^2 + \textrm{tr}(\nabla_\rvx \vs_\theta(\rvx)) \bigg] + \textrm{const.}
\end{align}
where $\textrm{tr}(\cdot)$ denotes the matrix trace, and the optimal score model satisfies $\vs_{\theta^*}(\rvx) \approx \vs(\rvx)$. 
While the trace term in Eq.~\ref{eq:sm_objective} remains problematic---it requires an expensive evaluation of the Hessian of the log-density function---recent works \cite{song2020sliced,song2020score} leverage the Skilling-Hutchinson trace estimator \cite{skilling1989eigenvalues,hutchinson1989stochastic} or directional derivatives \cite{pang2020efficient} to efficiently approximate the training objective.

Despite their key role in successfully scaling score-based generative models to high-dimensional datasets, we note two critical limitations of existing score matching techniques: (1) $\cX$ must be continuous; and (2) the density function $\pdata(\rvx)$ must be differentiable.
This poses a significant challenge in discrete domains, where both requirements fail to hold.

\section{The Concrete Score}
\label{sec:method}
The above limitations prevent the direct application of score matching techniques to discrete data. We thus propose the Concrete score, a generalization of the (Stein) score for discrete settings.
The key intuition is that although the gradient is undefined in discrete spaces, we can still construct a \emph{surrogate} 
for the gradient by leveraging local directional changes to the input. 
We do so by exploiting special neighborhood structures in the data, and elaborate upon the necessary conditions on these structures to guarantee that our surrogate gradient indeed recovers a valid score function that (1) completely characterizes the distribution, and (2) is amenable for parameter estimation.

\subsection{Constructing Surrogate Gradients for Discrete Data}
\label{sec:surrogate}
To be more precise, let $\pdata(\x)$ be the data
distribution over $\cX$.
We denote $\cN: \cX \to \cX^K$
as the function mapping each example $\rvx \in \cX$ to a set of neighbors, such that $\mathcal{N}(\x)=\{\x_{n_1},...,\x_{n_k}\}$ and $K = \vert \cN(\x) \vert$ \footnote{We consider a fixed number of neighbors $K$ for each $\rvx$ to simplify our notation, but note that this can be generalized to a variable number of neighbors for every $\rvx$.}. 
This neighborhood induces a particular graphical structure onto the support of $\pdata(\rvx)$, which we call the "neighborhood-induced graph", that will play a key role in constructing the surrogate gradient.
We provide a formal definition below.

\begin{definition}[Neighborhood-induced graph]
Let $\pdata(\x)$ be the data distribution over $\cX$ and $\cN$ be the function mapping each node $\x \in \cX$ to its set of neighbors.
The neighborhood-induced graph $\cG$ is the directed graph which results from adding a directed edge from $\rvx$ to each node in its neighborhood set $\x_n \in \cN(\rvx)$, for all $\rvx \in \supp(\pdata(\rvx))$.
\end{definition}
An important point is that the neighborhood structure can be asymmetric, since the neighborhood-induced graph is a directed graph. This implies that there may exist cases where $\cN(\rvx) = \{\rvx_1\}$ does not necessarily imply that $\cN(\rvx_1) = \{\rvx\}$. We provide an intuitive example below.
\begin{example}
When $\mathcal{X}=\{\x_0,\x_1,\ldots,\x_5\}$, and the neighborhood structure $\mathcal{N}$ is defined as: $\mathcal{N}(\x_0)=\{ \emptyset \}$ and $\mathcal{N}(\x_i)=\{\x_0\} \textrm{ } \forall i=1,\ldots,5$, the neighborhood-induced graph is the star graph in Figure~\ref{fig:graph_example}.
\end{example}

There exist a wide range of neighborhood structures, all of which yield different neighborhood-induced graphs. 
We visualize five common structures in 
Figure~\ref{fig:graph_example}.

\begin{figure}[H]
    \centering
    \includegraphics[width=\linewidth]{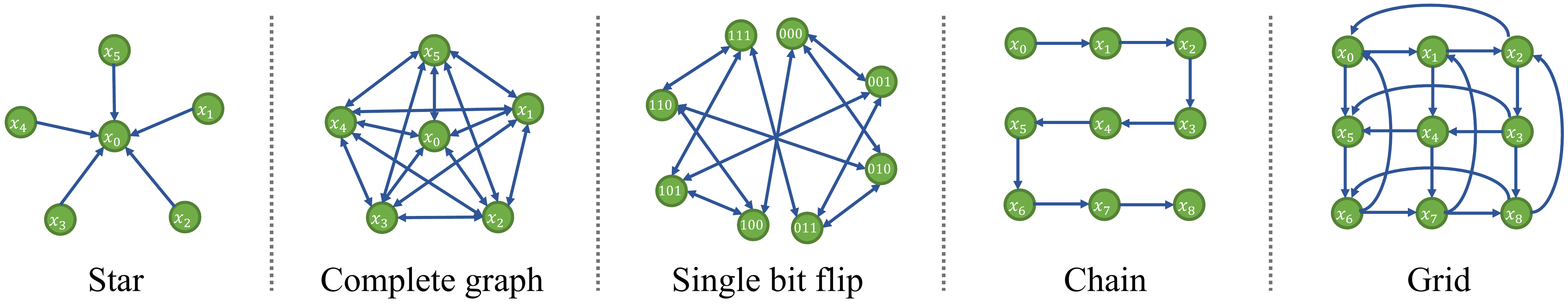}
    \caption{Examples of common neighborhood structures and their corresponding (connected) neighborhood-induced graphs, where the arrows point to the neighbors of a given node. Note that the neighborhood structure is directed.
    }
    \label{fig:graph_example}
    \vspace{-15pt}
\end{figure}

Such graphs provide insight into the kinds of neighborhood structures that are amenable to our framework.
One necessary characteristic of such graphs is \emph{connectedness}.
In fact, the five common graphical structures (among others) shown in Figure~\ref{fig:graph_example} are all weakly connected graphs.
We emphasize that connectedness does not take the directionality of the underlying graph $\cG$'s edges into account---it does not matter whether a node $\rvx$ has an incoming or outgoing edge.

With the above definitions in place, we can now define the surrogate gradient as the local differences between a set of examples
(similar to directional derivative in the continuous case).
Using this notion of the ``gradient'', we construct the Concrete score as the rate of change of the probabilities with respect to these local directional changes in the input $\rvx$ as defined below.

\begin{definition}[Concrete score]
\label{def:concrete}
Let $\cN$ be a function mapping each point $\rvx$ to its set of neighbors $\cN(\rvx) = \{\x_{n_1},...,\x_{n_k}\}$. %
Then, the Concrete score $\cdata(\rvx; \cN): \cX \rightarrow \bbR^{|\mathcal{N}(\x)|}$ for a given distribution $\pdata(\rvx)$ evaluated at $\rvx$ is:
\begin{equation}
\label{def:concrete_score}
    \cdata(\x ;\mathcal{N}) \triangleq \bigg[ 
    \frac{\pdata(\x_{n_1})-\pdata(\x)}{\pdata(\x)},
    ...,
    \frac{\pdata(\x_{n_k})-\pdata(\x)}{\pdata(\x)}\bigg]^T.
\end{equation}
\end{definition}

Although we have defined the Concrete score, we have yet to justify whether it is indeed a suitable score function for estimating $\pdata(\rvx)$.
The necessary condition, which we call \emph{completeness} \cite{lyu2012interpretation}, ensures that the Concrete score preserves enough information about $\pdata(\rvx)$ such that we can successfully learn $\pdata(\rvx)$ from data samples using score matching. 
We make this statement more precise in the following theorem.
\begin{restatable}[Completeness]{theorem}{completeness}

\label{thm:completeness}
Let $\pdata(\x)$ be a (discrete) data distribution. Denote $\cdata(\x; \mathcal{N})$ as the Concrete score of $\pdata(\rvx)$ with neighboring structure $\mathcal{N}$, and $\vc_\theta(\x;  \mathcal{N})$ the Concrete score for a distribution $p_\theta(\x)$ parameterized by 
$\theta \in \Theta$.
When the graph induced by the neighborhood structure $\mathcal{N}$ is connected,
$c_\theta(\x;  \mathcal{N}) = \cdata(\x; \mathcal{N})$ implies that $p_\theta(\x) = \pdata(\x)$ $\forall \x \in \cX$.
\end{restatable}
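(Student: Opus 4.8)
The plan is to read each coordinate of the Concrete score as a probability ratio across a single edge of the neighborhood-induced graph $\cG$, show that equality of the two scores forces the pointwise density ratio $p_\theta/\pdata$ to be invariant along every edge, and then invoke connectedness to conclude that this ratio is globally constant — and hence equal to $1$ by normalization. Concretely, I would first unpack the hypothesis $\vc_\theta(\x; \mathcal{N}) = \cdata(\x; \mathcal{N})$ coordinate-wise. By Definition~\ref{def:concrete}, its $i$-th entry reads $\frac{p_\theta(\x_{n_i})}{p_\theta(\x)} - 1 = \frac{\pdata(\x_{n_i})}{\pdata(\x)} - 1$, so for every directed edge $\x \to \x_{n_i}$ of $\cG$ we obtain
\begin{equation}
\label{eq:edge_ratio}
\frac{p_\theta(\x_{n_i})}{p_\theta(\x)} = \frac{\pdata(\x_{n_i})}{\pdata(\x)}.
\end{equation}
Introducing the density ratio $r(\x) \triangleq p_\theta(\x)/\pdata(\x)$ (defined on $\supp(\pdata)$, where both denominators are positive), Eq.~\ref{eq:edge_ratio} rearranges to $r(\x_{n_i}) = r(\x)$: the ratio is constant along each edge. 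Crucially this is an \emph{equality}, hence symmetric, so it transfers across an edge irrespective of its orientation --- which is exactly why \emph{weak} connectedness, rather than strong connectedness, is the correct hypothesis here.

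Next I would propagate this invariance through the whole graph. Given any two nodes $\x, \x' \in \supp(\pdata)$, connectedness of $\cG$ supplies a path joining them in the underlying undirected graph; applying $r(\x_{n_i}) = r(\x)$ once per edge along this path telescopes to $r(\x) = r(\x')$. Hence $r \equiv c$ is a single constant on $\supp(\pdata)$, i.e. $p_\theta(\x) = c\,\pdata(\x)$ there. Summing over $\supp(\pdata)$ and using that both $p_\theta$ and $\pdata$ are normalized distributions then forces $c = 1$, and therefore $p_\theta = \pdata$.

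The step I expect to require the most care is the bookkeeping around supports and positivity, since the scores are only defined where the denominators are nonzero. I would accordingly either assume the two models share a common support (a standard regularity condition for density-ratio arguments) or derive it: Eq.~\ref{eq:edge_ratio} together with $\pdata > 0$ on its support shows that $p_\theta(\x) > 0 \iff p_\theta(\x_{n_i}) > 0$, so positivity of $p_\theta$ propagates from any single positive node across the whole connected support, guaranteeing that $r$ is genuinely well-defined everywhere it is used and that the final normalization in the previous paragraph is taken over exactly $\supp(\pdata)$. Apart from this support argument, the proof reduces to the short telescoping-along-a-path computation above.
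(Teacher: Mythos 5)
Your proof is correct and follows essentially the same route as the paper's: both extract the edge-wise density-ratio equality $p_\theta(\x_{n_i})/p_\theta(\x) = \pdata(\x_{n_i})/\pdata(\x)$ from the score definition, propagate it along paths using (weak) connectedness of the neighborhood-induced graph, and conclude via normalization --- your telescoping of the ratio $r = p_\theta/\pdata$ is just a compact repackaging of the paper's product-of-ratios-along-a-path argument. Your explicit handling of supports, positivity, and the final normalization step $c=1$ is somewhat more careful than the paper's sketch, but it does not constitute a different approach.
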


\begin{proof}[Proof sketch.]
If two nodes $\rvx$ and $\rvx'$ in a neighborhood-induced graph $\cG$ share an edge, then their density ratio $\pdata(\rvx')/\pdata(\rvx)$ can be uniquely identified using $\cdata(\x; \cN)$.
Thus when $\cG$ is connected, the density ratio between any node pairs in $\cG$ is uniquely identified given $\cdata(\x; \cN)$. 
Therefore, knowing the density ratio between any two points uniquely identifies $\pdata(\x)$.
\end{proof}

We provide the full proof in Appendix~\ref{app:proof}. 
We emphasize here that the connectedness of the neighborhood-induced graph (a kind of regularity condition on $\cN$ and $\pdata(\x)$) was crucial for demonstrating the completness of the Concrete score.

\looseness=-1
Note that \Cref{thm:completeness} only requires the neighborhood-induced graph to be connected.
This implies that given a connected graph, we can augment it with additional edges---the new graph will still remain complete, but contain additional information in the augmented neighborhood structure that may help parameter estimation in practice. 
This is a phenomenon that we observe empirically in Section~\ref{2d_multi_path}.

\subsection{Connection to Stein Scores and Existing Score Matching Techniques}
\label{sec:connection}
\looseness=-1
The form of the Concrete score in Eq.~\ref{def:concrete} suggests a natural connection with the Stein score in continuous domains.
In the following proposition, we illustrate the connection between the two when we define
the neighborhood structure to be similar to the grid in Figure~\ref{fig:graph_example}.
In particular, as the distance between neighboring nodes shrinks to zero, our Concrete score converges to the Stein score up to a multiplicative constant.
\begin{proposition}
\label{prop:proposition1}
For $\rvx \in \R^D$, $p(\rvx) \in \gP(\R^D)$, and $\delta > 0$, let the neighborhood structure $\gN_\delta(\rvx) = \{\rvx + \delta \textbf{e}_i\}_{i=1}^{D}$. Then, we have:
\begin{align*}
    \lim_{\delta \to 0} \frac{\vc_p(\x; \cN_\delta)}{\delta} = \nabla_{\x} \log p(\x).
\end{align*}
\end{proposition}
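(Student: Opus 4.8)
The plan is to compute the $i$-th component of the vector $\vc_p(\x;\cN_\delta)/\delta$ explicitly and show it converges to the $i$-th component of $\nabla_\x \log p(\x)$, namely $\partial_{x_i} \log p(\x)$. By \Cref{def:concrete}, with neighborhood structure $\gN_\delta(\x) = \{\x + \delta \ve_i\}_{i=1}^D$, the $i$-th entry of $\vc_p(\x;\cN_\delta)$ is exactly $\big(p(\x+\delta\ve_i) - p(\x)\big)/p(\x)$. Dividing by $\delta$, I would write the $i$-th component as
\begin{equation*}
    \frac{1}{\delta}\cdot\frac{p(\x+\delta\ve_i)-p(\x)}{p(\x)} = \frac{1}{p(\x)}\cdot\frac{p(\x+\delta\ve_i)-p(\x)}{\delta}.
\end{equation*}

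The core of the argument is then recognizing the second factor as a difference quotient. First I would invoke the differentiability of $p$ (required for the Stein score to make sense, and implicit since $p \in \gP(\R^D)$ is a density on $\R^D$) to take the limit $\delta \to 0$ of the difference quotient, which by definition of the partial derivative yields $\partial_{x_i} p(\x)$. Thus the $i$-th component converges to $\partial_{x_i} p(\x)/p(\x)$. Second, I would apply the chain rule identity $\partial_{x_i} \log p(\x) = \partial_{x_i} p(\x)/p(\x)$ to identify this limit as the $i$-th component of $\nabla_\x \log p(\x)$. Assembling the $D$ components into a vector gives the claimed vector identity $\lim_{\delta \to 0} \vc_p(\x;\cN_\delta)/\delta = \nabla_\x \log p(\x)$.

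A minor technical point worth making precise is that one must assume $p(\x) > 0$ at the point of evaluation so that the ratios in the Concrete score are well-defined and the limit is finite; on the support of $\pdata$ this holds by definition. I would state this standing assumption explicitly. The main obstacle here is genuinely mild: the only substantive content is the passage from the finite difference to the derivative, which is just the definition of the partial derivative combined with the continuity of division by the nonzero constant $p(\x)$ (which is independent of $\delta$ and so factors out of the limit cleanly). Because $p(\x)$ does not depend on $\delta$, no uniformity or interchange-of-limits subtlety arises, and the proof reduces to a componentwise application of the definition of the derivative followed by the $\log$-derivative chain rule.
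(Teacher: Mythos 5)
Your proof is correct and follows essentially the same route as the paper's: factor out $1/p(\x)$, recognize each entry as a first-order forward difference quotient converging to $\partial_{x_i} p(\x)$, and conclude via $\nabla_\x p(\x)/p(\x) = \nabla_\x \log p(\x)$. Your explicit remarks on differentiability of $p$ and positivity $p(\x)>0$ are sound housekeeping that the paper leaves implicit, but they do not change the argument.
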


We note that while from this perspective the Concrete score can be seen as a finite-difference approximation to the continuous (Stein) score, CSM is 
different from finite difference score matching (FD-SM) \cite{pang2020efficient}. We introduce a new family of score functions generalizable to discrete domains in the form of $(p(\rvx + \rvv) - p(\rvx)) / p(\rvx)$, where $\rvv$ is the direction from $\rvx$ pointing to its neighbor with length $\delta$.
On the other hand, FD-SM still attempts to match the Stein score, and approximates directional derivatives with finite difference using two neighbors in the form of $\log p(\rvx + \rvv) - \log p(\rvx - \rvv)$. 
Nevertheless, the differences between the two forms are $o(\delta)$ as $\delta \to 0$. %

\subsection{Inference with Concrete Scores}
\label{sec:sampling}
\looseness=-1
To perform inference, we note that there exist several ways to define Markov chain Monte Carlo (MCMC) samplers that only rely on the Concrete score---we highlight the simplest setting of Metropolis-Hastings~\cite{metropolis1953equation,hastings1970monte,chib1995understanding} for clarity of exposition.
We first note that, by definition:
\begin{equation}
\label{eq:mh_ratio}
    \cdata(\x; \cN) + \vone = \bigg[ 
    \frac{\pdata(\x_{n_1})}{\pdata(\x)},
    ...,
    \frac{\pdata(\x_{n_k})}{\pdata(\x)}\bigg]^T
\end{equation}
This implies that we can obtain the density ratios between the neighboring pairs 
$\pdata(\x_{n_1})/\pdata(\x)$
and 
$\pdata(\x)/\pdata(\x_{n_1})$
given the 
Concrete score.
Given a proposal distribution $q$, our sampler will accept the proposed update $\x'$ with acceptance probability 
$A(\rvx' \vert \rvx) = \min \left(1, \frac{\pdata(\rvx')q(\rvx \vert \rvx')}{\pdata(\rvx)q(\rvx' \vert \rvx)} \right)$.
Specifically, we can choose $q$ to 
sample uniformly among $\gN(\x)$ given a particular $\x$. 
When the neighborhood structure is symmetric, this selection leads to a simplified
acceptance probability
$A(\rvx' \vert \rvx) = \min \left(1, \frac{\pdata(\rvx')}{\pdata(\rvx)} \right)$.
We note that when the underlying graph is connected, the chain is aperiodic (due to the presence of a rejection step), irreducible, and positive recurrent (since there is a finite number of discrete states), so the Markov chain is guaranteed to converge in the limit to the model distribution.

{
This means that we can also compute tighter bounds
on log-likelihood estimates obtained by un-normalized probability models trained via CSM. 
In particular, we can compute both a lower bound estimated via Annealed Importance Sampling (AIS) \citep{neal2001annealed} and a conservative upper bound estimated via the Reverse Annealed Importance Sampling Estimator (RAISE) \citep{burda2015accurate}.
This is because both AIS and RAISE allow us to approximate the intractable normalizing constant by constructing a sequence of intermediate distributions between our estimated target distribution and another proposal distribution, and we know that the Concrete score can be repurposed to obtain the necessary density ratios between neighboring pairs of distributions along this sequence.
}

\section{Learning Concrete Scores with Concrete Score Matching}
\subsection{The Concrete Score Matching Objective}
Next, under the assumption that $\cN$ induces a weakly connected graph over the support of $\pdata(\rvx)$, we 
propose a new score matching framework called Concrete Score Matching (CSM) for estimating Concrete scores from data samples.
To do so, we
estimate $\cdata(\x; \cN)$ with a 
score model 
$\vc_\theta (\cdot; \mathcal{N}): \bbR^{D} \rightarrow \bbR^{|\mathcal{N}(\x)|}$, where $|\mathcal{N}(\x)|$ denotes the size of the neighborhood of $\x$.
Our proposed training objective is quite natural, as it measures the average $\ell_2$ difference between our  score model $\vc_\theta(\cdot; \cN)$ and the true score $\cdata(\cdot; \cN)$ similar to (continuous) score matching:
\begin{align}
\label{eq:csm_original}
    \cL_{\mathrm{CSM}} (\theta)= \sum_{\x} \pdata(\x)\norm {\vc_\theta(\x; \mathcal{N})-\cdata(\x; \cN)}_2^2
\end{align}
In the following theorem, we demonstrate that Eq.~\ref{eq:csm_original} is indeed a principled learning objective: the estimated $\theta^*$ is a consistent estimator for the true underlying distribution $\pdata(\rvx)$. 
\begin{restatable}[Consistency]{theorem}{consistency}
\label{thm:consistency}
In the limit of infinite data and infinite model capacity, 
the optimal $\theta^{*}$ that minimizes Eq.~\ref{eq:csm_original} recovers the true Concrete score, or satisfies
$\vc_{\theta^{*}}(\x; \cN)= \cdata(\x; \mathcal{N})$.
\end{restatable}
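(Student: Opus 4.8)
The plan is to exploit the fact that the CSM objective in Eq.~\ref{eq:csm_original} is a non-negative, pointwise-separable sum over the state space, so that its global minimizer is pinned down state-by-state. First I would observe that every summand $\pdata(\x)\norm{\vc_\theta(\x;\cN) - \cdata(\x;\cN)}_2^2$ is non-negative, since $\pdata(\x) \geq 0$ and the squared Euclidean norm is non-negative. Hence $\cL_{\mathrm{CSM}}(\theta) \geq 0$ for every $\theta$, with equality precisely when each summand vanishes. Because $\pdata(\x) > 0$ exactly on $\supp(\pdata(\x))$, the summand at $\x$ either vanishes trivially (when $\pdata(\x) = 0$) or forces $\vc_\theta(\x;\cN) = \cdata(\x;\cN)$ (when $\pdata(\x) > 0$). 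Thus $\cL_{\mathrm{CSM}}(\theta) = 0$ if and only if $\vc_\theta(\x;\cN) = \cdata(\x;\cN)$ for all $\x \in \supp(\pdata(\x))$.

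Second, I would invoke the infinite-capacity assumption to show that this zero lower bound is actually attained. Under infinite model capacity the family $\{\vc_\theta(\cdot;\cN) : \theta \in \Theta\}$ can represent any function from $\cX$ into $\bbR^{|\cN(\x)|}$; in particular there exists a $\theta$ whose output coincides with the true Concrete score $\cdata(\cdot;\cN)$ on the support. For this $\theta$ the objective equals $0$, so any minimizer $\theta^{*}$ must also achieve $\cL_{\mathrm{CSM}}(\theta^{*}) = 0$. By the previous step this is equivalent to $\vc_{\theta^{*}}(\x;\cN) = \cdata(\x;\cN)$ for all $\x \in \supp(\pdata(\x))$, which is exactly the claim.

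Third, the infinite-data assumption is what lets me identify the quantity actually minimized in training with the population objective written in Eq.~\ref{eq:csm_original}. In practice one minimizes a Monte Carlo estimate of the form $\frac{1}{N}\sum_{i=1}^{N} \norm{\vc_\theta(\x_i;\cN) - \cdata(\x_i;\cN)}_2^2$ with $\x_i \sim \pdata$; a standard law-of-large-numbers argument gives pointwise-in-$\theta$ convergence of this estimate to $\E_{\pdata(\x)}\big[\norm{\vc_\theta(\x;\cN) - \cdata(\x;\cN)}_2^2\big] = \cL_{\mathrm{CSM}}(\theta)$ as $N \to \infty$, so the minimizers of the empirical and population objectives coincide in the limit.

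I do not expect a deep obstacle here, as this is an identifiability-by-construction argument rather than a quantitative one. The single point that needs care—and which I would state explicitly—is that the conclusion is only guaranteed on $\supp(\pdata(\x))$: off the support the objective places no constraint on $\vc_\theta$. Consequently, turning score recovery into \emph{distribution} recovery relies on coupling this theorem with the completeness result (\thmref{thm:completeness}), which establishes that matching the Concrete score on a connected support already determines $\pdata(\x)$ itself.
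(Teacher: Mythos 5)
Your proposal is correct and takes essentially the same route as the paper, whose proof is a one-line observation that the weighted least-squares objective in Eq.~\ref{eq:csm_original} is minimized exactly when $\vc_{\theta^{*}}(\x;\cN)=\cdata(\x;\cN)$ almost everywhere (i.e., wherever $\pdata(\x)>0$). Your explicit treatment of the support caveat and the hand-off to \thmref{thm:completeness} for distribution recovery simply spell out what the paper's ``almost everywhere'' qualifier and its subsequent remark leave implicit.
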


\Cref{thm:completeness} implies that the estimated (underlying) $p_{\theta^{*}}(\x)$ from $c_{\theta^{*}}(\rvx; \cN)$ satisfies
$p_{\theta^{*}}(\x) = \pdata(\x) \textrm{ } \forall \x \in \cX$.
Next, we note that Eq.~\ref{eq:csm_original} can be simplified to an objective that we can tractably optimize by removing its dependence on the unknown $\cdata(\rvx; \cN)$, similar in spirit to Eq.~\ref{eq:sm_objective}. 
Although the added flexibility of the score model may potentially lead to situations where it does not correspond to a valid probability distribution---similar to how continuous score models do not necessarily form a conservative vector field---this does not appear to hurt CSM's performance empirically.

\begin{restatable}{theorem}{expandedcsm}
\label{thm:tractable}
Optimizing \Cref{eq:csm_original} is equivalent to optimizing:
\begin{align}
    \label{eq:discrete_sm_simplified}
    \cJ_{\mathrm{CSM}}(\theta)&=\underbrace{\sum_{\x}\sum_{i=1}^{|\mathcal{N}(\x)|} \pdata(\x) \bigg( \vc_\theta(\x; \cN)^2_i+2\vc_\theta(\x; \cN)_i \bigg)}_{\cJ_{1}} - 
    \underbrace{\sum_{\x}\sum_{i=1}^{|\mathcal{N}(\x)|}2\pdata(\x_{n_i}) \vc_\theta(\x; \cN)_i}_{\cJ_{2}}
\end{align}
 where $\mathcal{N}(\x) = \{\x_{n_1},...,\x_{n_k}\}$ is the set of neighbors of $\x$.
\end{restatable}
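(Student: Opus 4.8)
The plan is to prove the equivalence by a direct algebraic expansion of the squared $\ell_2$ norm in \Cref{eq:csm_original}, exploiting the fact that the weighting measure $\pdata(\x)$ in the outer sum cancels precisely against the denominator appearing in the definition of the Concrete score $\cdata(\x;\cN)$. Unlike the continuous case in \Cref{eq:sm_objective}, where integration by parts is needed to eliminate the dependence on the unknown ground-truth score, here the elimination is purely algebraic: no integration by parts and no Stein-type identity are required.

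First I would expand the norm inside $\cL_{\mathrm{CSM}}(\theta)$ componentwise,
\[
\norm{\vc_\theta(\x;\cN) - \cdata(\x;\cN)}_2^2 = \sum_{i=1}^{|\cN(\x)|} \left( \vc_\theta(\x;\cN)_i^2 - 2\,\vc_\theta(\x;\cN)_i\,\cdata(\x;\cN)_i + \cdata(\x;\cN)_i^2 \right),
\]
and observe that the pure-target term $\cdata(\x;\cN)_i^2$ is independent of $\theta$, so after multiplying by $\pdata(\x)$ and summing over $\x$ it contributes only an additive constant that does not affect the $\argmin$. The quadratic model term $\sum_\x \pdata(\x) \sum_i \vc_\theta(\x;\cN)_i^2$ is already the first summand of $\cJ_1$, so the only real work lies in the cross term.

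The crux is to rewrite the cross term without the intractable ratio. Substituting $\cdata(\x;\cN)_i = (\pdata(\x_{n_i}) - \pdata(\x))/\pdata(\x)$ into $-2\sum_\x \pdata(\x)\sum_i \vc_\theta(\x;\cN)_i\,\cdata(\x;\cN)_i$ and canceling the factor $\pdata(\x)$, I obtain
\[
-2\sum_\x \sum_i \vc_\theta(\x;\cN)_i\left(\pdata(\x_{n_i}) - \pdata(\x)\right) = 2\sum_\x \pdata(\x)\sum_i \vc_\theta(\x;\cN)_i - 2\sum_\x \sum_i \pdata(\x_{n_i})\,\vc_\theta(\x;\cN)_i.
\]
The first resulting piece is exactly the linear summand $+2\vc_\theta(\x;\cN)_i$ of $\cJ_1$, while the second is precisely $-\cJ_2$. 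Collecting the quadratic term, both pieces of the cross term, and the dropped constant yields $\cL_{\mathrm{CSM}}(\theta) = \cJ_1 - \cJ_2 + \mathrm{const} = \cJ_{\mathrm{CSM}}(\theta) + \mathrm{const}$; since the two objectives differ only by a $\theta$-independent constant, their minimizers coincide.

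The step I expect to require the most care is the bookkeeping in the cross term: after cancellation one must keep $\vc_\theta(\x;\cN)_i$ anchored at the tail $\x$ while the density weight shifts to the head $\x_{n_i}$, so that the surviving term $\sum_\x \sum_i \pdata(\x_{n_i})\,\vc_\theta(\x;\cN)_i$ is genuinely free of the ratio $\pdata(\x_{n_i})/\pdata(\x)$ — it now depends on $\pdata$ only through a quantity estimable from samples (an expectation over the neighborhood-induced edge set), which is exactly what makes $\cJ_{\mathrm{CSM}}$ tractable. The remaining manipulations are elementary rearrangements of finite sums.
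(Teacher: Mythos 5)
Your proposal is correct and follows essentially the same route as the paper's proof: expand the squared norm, discard the $\theta$-independent term $\norm{\cdata(\x;\cN)}_2^2$, and use the cancellation of $\pdata(\x)$ against the denominator in the definition of $\cdata(\x;\cN)$ to split the cross term into the linear part of $\cJ_1$ and the term $-\cJ_2$. Your write-up is in fact slightly more explicit than the paper's, which passes from the expanded square directly to $\cJ_1 - \cJ_2$ without spelling out the substitution, but the argument is identical.
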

We note that in Eq.~\ref{eq:discrete_sm_simplified}, the objective  $\cJ_{\mathrm{CSM}}(\theta)$ can be approximated using Monte Carlo samples. We elaborate on the empirical training details in \Cref{sec:efficient_training}.

\label{sec:practice}

\subsection{Efficient Training via Monte Carlo}
\label{sec:efficient_training}

In practice, Eq.~\ref{eq:discrete_sm_simplified} is still inefficient; it involves a summation over all $\vert \cN(\rvx) \vert$ neighbors for $\rvx$,
which can be expensive for high-dimensional datasets. 
Therefore, we propose several modifications to the original training objective that we found to be crucial for scaling up and improving our approach.

We begin with the leftmost term $\cJ_{1}$. 
First, we approximate the outer expectation w.r.t. $ \pdata(\x)$ with Monte Carlo samples from the empirical data distribution.
Next, rather than evaluating the objective for all $|\mathcal{N}(\x)|$ neighbors, we sample a neighbor $\rvx_{n_i}$ uniformly at random among $\cN(\rvx_i)$ for every $\rvx_i$ in a mini-batch. We then upweight the output from the model using this sample by $\vert \cN(\rvx) \vert$. We provide the unbiased training algorithm for the leftmost term in \Cref{alg:term1}. We note that is similar in spirit to sliced score matching~\cite{song2020sliced}, an approximation technique to make continuous score matching scalable to higher dimensions.

Next, we turn to the rightmost term $\cJ_{2}$. Similar to $\cJ_1$, we can estimate $\cJ_2$ using an unbiased estimator based on samples as demonstrated in \Cref{alg:term2}.
We define $\gN^{-1}(\x') = \{(\x, i) | \gN(\x)_i = \x'\}$ as the
``reverse neighborhood'' set,
where an element $(\x, i) \in \gN^{-1}(\x')$ indicates that $\x'$ is the $i$-th neighbor of $\x$. There could be multiple $\x$ with the same $i$ in $\gN^{-1}(\x')$ as in the case of the star graph. Computing and storing $\gN^{-1}$ as a hash table mapping $\x$ to the set of $\gN^{-1}(\x)$ has a time and space complexity of at most $O(\sum_\x |\gN(\x)|)$ (\textit{i.e.}, the number of edges). For special structures (\textit{e.g.}, grid), we can design specific algorithms that bypass this process. We provide more details in \Cref{app:experiment}. 

\begin{minipage}{0.48\textwidth}
\begin{algorithm}[H]
\caption{An unbiased estimator of $\gJ_1$}
\label{alg:term1}
\textbf{Input:} $\pdata$, $\mathcal{N}$, $\vc_\theta$ (model). \\
1. Sample $\x\sim\pdata(\rvx)$.\\
2. Sample $i \sim \mathrm{Uniform}\{1, \ldots, |\gN(\x)|\}$.\\
\textbf{Output:}  $|\gN(\x)| \cdot \left( \vc_\theta(\x; \mathcal{N})^2_{i}+2\vc_\theta(\x; \mathcal{N})_{i} \right)$.
\end{algorithm}
\end{minipage}
\hfill
\begin{minipage}{0.48\textwidth}
\begin{algorithm}[H]
\caption{An unbiased estimator of $\gJ_2$}\label{alg:term2}
\textbf{Input:} $\pdata$, $\mathcal{N}$, $\vc_\theta$ (model). \\
1. Sample $\x' \sim\pdata(\rvx)$.\\
2. Sample $(\x, i) \sim \mathrm{Uniform}(\gN^{-1}(\x'))$. \\
\textbf{Output:}  $2 \cdot |\gN^{-1}(\x')| \cdot \vc_\theta(\x; \mathcal{N})_{i}$.
\end{algorithm}
\end{minipage}

We leverage the grid structure in our experiments, where the neighbors of $\rvx$ are defined to be $\cN(\rvx) = \{\rvx \pm \rve_d\}_{d=1}^D$.
\Cref{alg:term1} and \Cref{alg:term2} allow us to  efficiently train the model by sampling a dimension $d \in [D]$ and flipping the bit for the $d$th entry of $\rvx$ 
for each $\rvx_n$.

\subsection{Denoising Concrete Score Matching}
\label{sec:denoising}

Finally, we propose another training objective to approximate Eq.~\ref{eq:discrete_sm_simplified}, with a focus on computational efficiency.
Similar to denoising score matching (DSM) for (Stein) score estimation \cite{vincent2011connection}, we derive a denoising counterpart for Concrete score estimation called ``Denoising Concrete Score Matching'' (D-CSM).
Specifically, given a discrete data distribution $\pdata(\x)$ and a discrete noise distribution $\tilde{q}(\tx | \x)$, 
we define the perturbed data distribution $\tilde{p}(\tx) = \sum_{\x} \pdata(\x) \tilde q( \tx |\x)$, and the posterior $q(\x | \tx) = \frac{\pdata(\x) \tilde{q}( \tx | \x)}{\tilde p(\tx)}$.
Then, we can show that the Concrete score of the perturbed data distribution $\tilde{p}(\tx)$ can be obtained via $\vc_{\tilde p(\tx)}(\tx; \mathcal{N}) = \sum_{\x} \vc_{\tilde{q}(\tx| \x)}( \tx; \mathcal{N}) {q(\x | \tx)}$. 
This property allows us to obtain the D-CSM objective, as shown in the following theorem:
\begin{restatable}[Denoising Concrete Score Matching]{theorem}{propertytwo}
The objective:
\begin{align}
\label{eq:denoising_csm}
    \cJ_{\mathrm{D-CSM}}(\theta)=\sum_{\x,\tx} \pdata(\x) \tilde{q}(\tx | \x) \norm{\vc_\theta(\tx; \mathcal{N}) - \vc_{q( \tx | \x)}(\tx; \mathcal{N})}_2^2 
\end{align}
is minimized when $\vc_\theta(\tx; \mathcal{N}) = \vc_{p(\tx)}(\tx; \mathcal{N} )$. 
\end{restatable}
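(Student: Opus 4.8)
The plan is to follow the denoising score matching argument of Vincent and show that $\cJ_{\mathrm{D-CSM}}(\theta)$ coincides, up to an additive constant independent of $\theta$, with the explicit matching objective $\sum_{\tx} \tilde{p}(\tx)\,\norm{\vc_\theta(\tx; \cN) - \vc_{\tilde{p}(\tx)}(\tx; \cN)}_2^2$, where $\vc_{\tilde{p}(\tx)}$ denotes the Concrete score of the perturbed marginal $\tilde{p}(\tx)$ (this is the target written $\vc_{p(\tx)}$ in the statement). Since that explicit objective is a nonnegative weighted sum of squared norms that attains value zero exactly when $\vc_\theta(\tx; \cN) = \vc_{\tilde{p}(\tx)}(\tx; \cN)$ on the support of $\tilde{p}$, the claimed minimizer follows immediately.

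First I would expand the squared $\ell_2$ norm inside the objective into the three pieces $\norm{\vc_\theta(\tx)}_2^2$, the cross term $-2\langle \vc_\theta(\tx), \vc_{\tilde{q}(\cdot | \x)}(\tx)\rangle$, and $\norm{\vc_{\tilde{q}(\cdot | \x)}(\tx)}_2^2$. The last piece depends only on the fixed noise distribution and the data, not on $\theta$, so I would absorb it into the constant. For the first piece, summing the joint weight $\pdata(\x)\tilde{q}(\tx | \x)$ over $\x$ via $\sum_{\x}\pdata(\x)\tilde{q}(\tx | \x) = \tilde{p}(\tx)$ collapses it to $\sum_{\tx}\tilde{p}(\tx)\,\norm{\vc_\theta(\tx)}_2^2$.

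The crux is the cross term. Here I would rewrite the joint weight by Bayes' rule as $\pdata(\x)\tilde{q}(\tx | \x) = \tilde{p}(\tx)\,q(\x | \tx)$, pull the $\theta$-dependent factor $\vc_\theta(\tx)$ out of the inner sum over $\x$, and then invoke the identity stated just before the theorem, $\sum_{\x} q(\x | \tx)\,\vc_{\tilde{q}(\cdot | \x)}(\tx) = \vc_{\tilde{p}(\tx)}(\tx)$, to replace the posterior-averaged conditional scores with the Concrete score of the perturbed marginal. This converts the cross term into $-2\sum_{\tx}\tilde{p}(\tx)\,\langle \vc_\theta(\tx), \vc_{\tilde{p}(\tx)}(\tx)\rangle$. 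Completing the square by adding and subtracting the $\theta$-independent quantity $\sum_{\tx}\tilde{p}(\tx)\,\norm{\vc_{\tilde{p}(\tx)}(\tx)}_2^2$ then yields $\cJ_{\mathrm{D-CSM}}(\theta) = \sum_{\tx}\tilde{p}(\tx)\,\norm{\vc_\theta(\tx) - \vc_{\tilde{p}(\tx)}(\tx)}_2^2 + \mathrm{const}$, which finishes the argument.

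The main obstacle is in fact already discharged for us: the linearity identity expressing the Concrete score of the marginal as the posterior expectation of the per-sample conditional Concrete scores is precisely the property asserted in the paragraph preceding the theorem, so the remaining work is the bookkeeping that reorganizes the double sum via Bayes' rule. The one subtlety I would verify is that each conditional score component $\vc_{\tilde{q}(\cdot | \x)}(\tx)_i = (\tilde{q}(\tx_{n_i} | \x) - \tilde{q}(\tx | \x))/\tilde{q}(\tx | \x)$ is well-defined, i.e. that the rearrangement only ever evaluates these ratios on the support where $\tilde{q}(\tx | \x) > 0$, so that every cancellation in the collapse of the $\x$-sum is legitimate.
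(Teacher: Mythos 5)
Your proof is correct and is essentially the paper's argument spelled out: the paper invokes the standard fact that the least-squares minimizer over the joint $\pdata(\x)\tilde{q}(\tx|\x)$ is the posterior expectation $\sum_{\x} q(\x|\tx)\,\vc_{\tilde{q}(\tx|\x)}(\tx;\cN)$ and then applies the same linearity identity (Property~\ref{prop:1}) to identify this with $\vc_{\tilde{p}(\tx)}(\tx;\cN)$, which is exactly what your expand-and-complete-the-square derivation establishes from scratch. Your closing remark about evaluating the conditional scores only where $\tilde{q}(\tx|\x)>0$ is a sound observation that the paper leaves implicit.
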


\looseness=-1
We draw a direct analogy to DSM in order to provide additional insight into D-CSM. Recall that in DSM, there exists a noise distribution $q(\tx|\x)$ (\ie $\mathcal{N}(\tx|\x,\sigma^2 I)$) such that the magnitude of Stein scores captures the distance between the perturbed $\tx$ and its clean counterpart $\x$.
D-CSM enjoys a similar interpretation.
In particular, consider an input space $\x \in \cX \subseteq \mathbb{Z}^{D}$ with the neighborhood structure $\mathcal{N}(\x)=\{\x+\rvv | \rvv \in \{-1, 0, 1\}^{D}\}$. If we define $q(\tx|\x)=\frac{1}{2^{D+k}}$ where $k$ is the Manhattan distance between $\tx$ and $\x$, then the Concrete score $\vc_{\tilde{p}(\tx)}(\tx;\mathcal{N})$ captures how close $\rvx$ is to $\tx$
as measured by the Manhattan distance. Therefore, D-CSM can also be understood through the lens of a denoiser.
{We provide additional experimental results exploring the performance of D-CSM in Appendix~\ref{app:dcsm_addtl}.}

\section{Experimental Results}
\label{sec:experiments}
In this section, 
we evaluate the performance of CSM on synthetic, tabular, and discrete image datasets on a variety of sampling and density estimation tasks. We provide additional details on specific experimental settings and hyperparameter configurations in Appendix~\ref{app:experiment}.

\subsection{Baselines}
In our experiments, we compare CSM to two relevant baselines for modeling discrete data: Ratio Matching and Discrete Score Matching with Marginalization (\texttt{Discrete Marginalization}). For conciseness, we provide additional details and derivations for their training objectives in \Cref{app:addtl_works}.

\textbf{Ratio Matching.} Similar to score matching, Ratio Matching \cite{lyu2012interpretation} leverages the fact that ratios of probabilities are independent of the intractable normalizing constant (due to cancellation).
The method then seeks to match the ground truth density ratios $\frac{\pdata(\rvx)}{\pdata(\rvx_{-d})} = \frac{q_\theta(\rvx)}{q_\theta(\rvx_{-d})}$ where $\rvx_{-d}$ denotes the vector $\rvx$ with the $d$th entry bit-flipped (e.g. from 0 to 1).

\textbf{Discrete Score Matching with Marginalization (Discrete Marginalization).} The Discrete Marginalization baseline is another way of estimating discrete probability distributions with score matching as in \cite{lyu2012interpretation}. However, we note that this approach is difficult to scale because it requires us to marginalize over 
the data dimension, which may also cause instabilities during training. 

\subsection{1-D Discrete Data}
\looseness=-1
\label{sec:1d_data}
In this experiment, we consider a 16-category 1-D data distribution as shown in \Cref{fig:1d_examples}.
We parameterize our Concrete score model $\vc_\theta(\rvx, \cN)$ using a shallow MLP model with Tanh activations, and use the Cycle neighborhood structure during training.
We demonstrate that the learned Concrete score model can faithfully capture the true data distribution when trained with CSM.
As shown on the left side of Figure~\ref{fig:1d_examples}, we find that CSM generates samples (\textcolor{cyan}{blue}) using MH that almost perfectly matches the samples drawn from $\pdata(\rvx)$ (\textcolor{green}{green}).

We also observe in Appendix~\ref{app:langevin_noise_perturbation} that the Concrete score on $\pdata(\x)$ can recover the Stein score of a data distribution perturbed with triangular noise. 
Even though the Concrete score model is trained on discrete data (\textcolor{green}{green}), this connection allows us to sample with Langevin dynamics using the recovered Stein score. On the right side of \Cref{fig:1d_examples}, we show how the samples generated using Langevin dynamics (\textcolor{orange}{smoothed orange}) closely match the triangular noise-perturbed data distribution (\textcolor{gray}{gray}). 
Then, we leverage a closed-form denoising formula for the perturbed distribution (Appendix~\ref{app:langevin_noise_perturbation}) to recover the clean data distribution (\textcolor{green}{green}) from samples obtained via Langevin dynamics (\textcolor{orange}{smoothed orange}). The resulting denoised samples (\textcolor{orange}{orange}) are labeled as \texttt{Denoised} in \Cref{fig:1d_examples}. We provide more details and discussion in \Cref{app:langevin_noise_perturbation}.

\begin{figure}[ht]
    \centering
    \includegraphics[width=\linewidth]{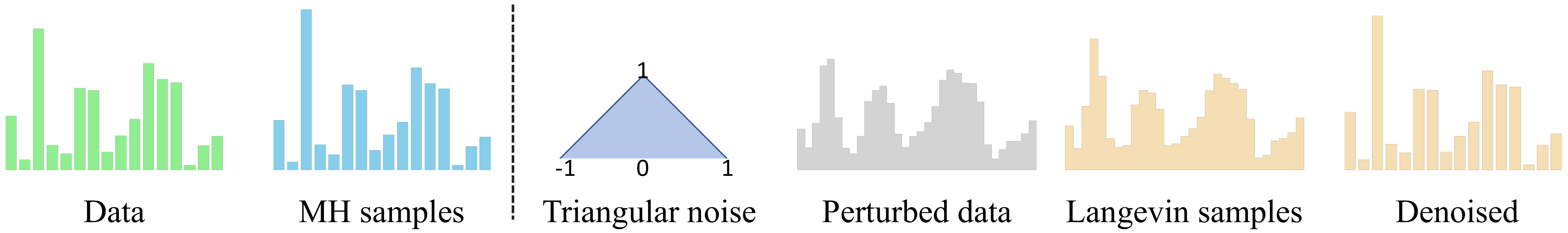}
    \caption{Sampling results from a toy 1-D discrete dataset. CSM recovers the true data distribution $\pdata(\rvx)$ using Metropolis-Hastings (left).
    The Concrete score can also be used to recover the Stein score of the triangular noise-perturbed data distribution, allowing for sampling with Langevin dynamics. Such samples can be denoised to recover the original (clean) data distribution (right).
    }
    \label{fig:1d_examples}
    \vspace{-15pt}
\end{figure}

\subsection{Sampling with Toy 2-D Multi-Class Datasets}
\label{2d_multi_path}
Next, we consider three 2-D toy benchmark datasets with multiple modes and discontinuities as commonly used in the density estimation literature \cite{grathwohl2018ffjord,de2020block}. 
We quantize the data into $91\times91$ bins to obtain the discrete  training  data for the experiment (see \Cref{fig:2d_samples}).
We compare our method against the two baselines: (1) Ratio Matching~\cite{hyvarinen2007connections}; and (2) 
Discrete Marginalization~\cite{lyu2012interpretation}. In particular, we found that the original equations in \cite{lyu2012interpretation} were incorrect, and provide results using the correct training objectives (denoted as \texttt{Ratio-fixed} and \texttt{Marginal-fixed}) in \Cref{fig:2d_samples}. We provide a step-by-step derivation addressing this issue with the correct expressions for the training objectives in \Cref{app:addtl_works}.

For a fair comparison, we use the same model architecture and training configurations across all methods. 
We use the grid neighborhood structure for training CSM. 
As shown in Figure~\ref{fig:2d_samples}, the samples from CSM best capture the shape of the underlying data distributions across all three datasets (leftmost column). 
Baselines implemented using the original equations in \cite{lyu2012interpretation} indeed demonstrate poor performance on the density estimation task (\texttt{Ratio} and \texttt{Marginal} columns) in Figure~\ref{fig:2d_samples}.

\begin{figure}[ht]
     \centering
     \begin{subfigure}[b]{0.15\textwidth}
         \centering
         \includegraphics[width=\textwidth]{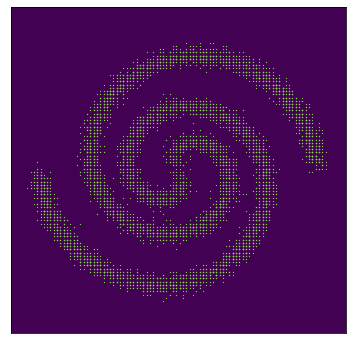}
     \end{subfigure}
    \begin{subfigure}[b]{0.15\textwidth}
         \centering
         \includegraphics[width=\textwidth]{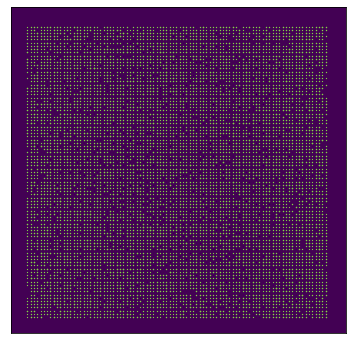}
     \end{subfigure}
     \begin{subfigure}[b]{0.15\textwidth}
         \centering
         \includegraphics[width=\textwidth]{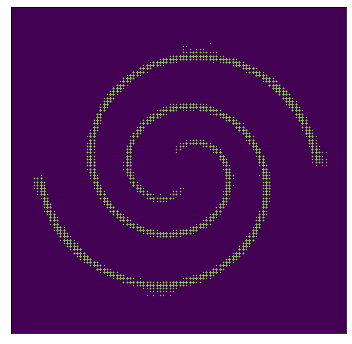}
     \end{subfigure}
      \begin{subfigure}[b]{0.15\textwidth}
         \centering
         \includegraphics[width=\textwidth]{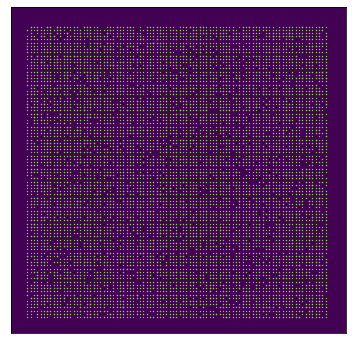}
     \end{subfigure}
     \begin{subfigure}[b]{0.15\textwidth}
         \centering
         \includegraphics[width=\textwidth]{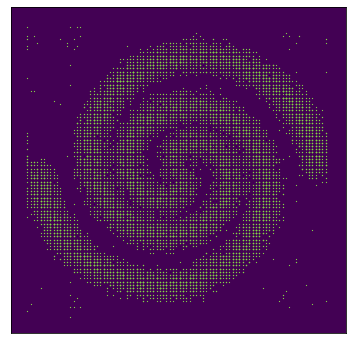}
     \end{subfigure}
     \begin{subfigure}[b]{0.15\textwidth}
         \centering
         \includegraphics[width=\textwidth]{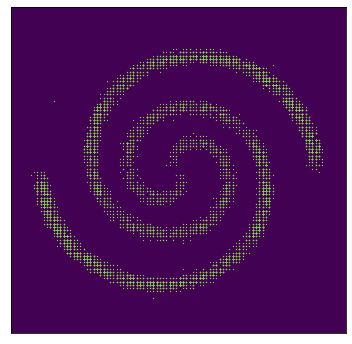}
     \end{subfigure}
    
    \begin{subfigure}[b]{0.15\textwidth}
         \centering
         \includegraphics[width=\textwidth]{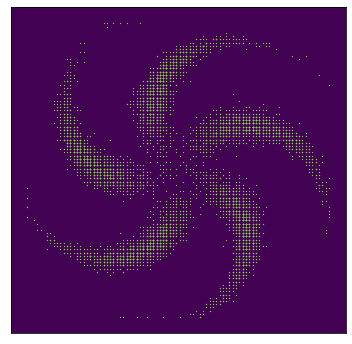}
     \end{subfigure}
     \begin{subfigure}[b]{0.15\textwidth}
         \centering
         \includegraphics[width=\textwidth]{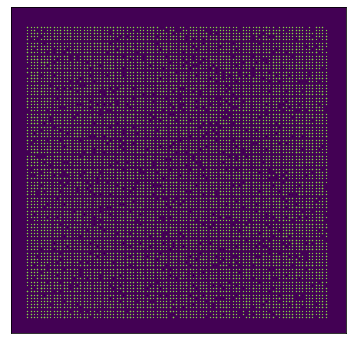}
     \end{subfigure}
     \begin{subfigure}[b]{0.15\textwidth}
         \centering
         \includegraphics[width=\textwidth]{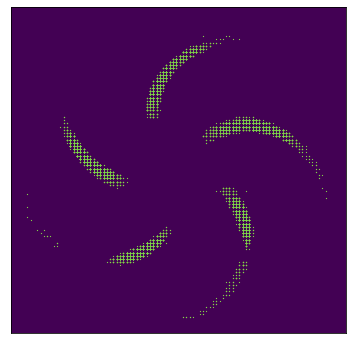}
     \end{subfigure}
      \begin{subfigure}[b]{0.15\textwidth}
         \centering
         \includegraphics[width=\textwidth]{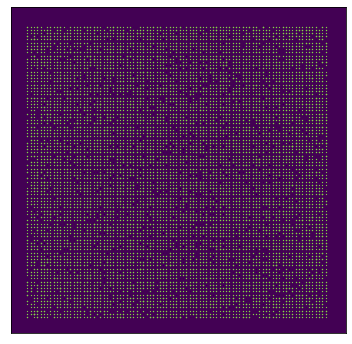}
     \end{subfigure}
     \begin{subfigure}[b]{0.15\textwidth}
         \centering
         \includegraphics[width=\textwidth]{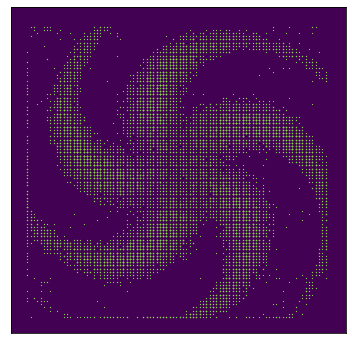}
     \end{subfigure}
     \begin{subfigure}[b]{0.15\textwidth}
         \centering
         \includegraphics[width=\textwidth]{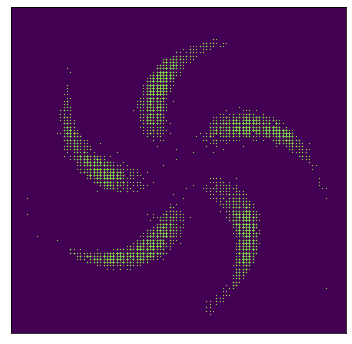}
     \end{subfigure}

     \begin{subfigure}[b]{0.15\textwidth}
         \centering
         \includegraphics[width=\textwidth]{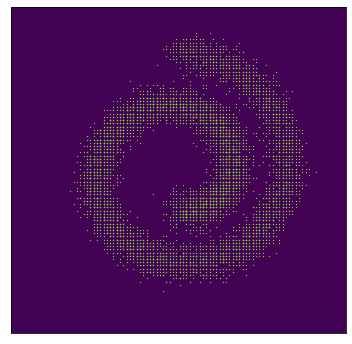}
         \caption*{Data}
     \end{subfigure}
     \begin{subfigure}[b]{0.15\textwidth}
         \centering
         \includegraphics[width=\textwidth]{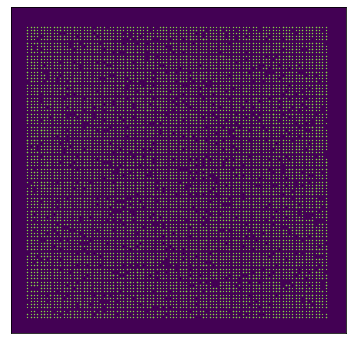}
         \caption*{Ratio}
     \end{subfigure}
     \begin{subfigure}[b]{0.15\textwidth}
         \centering
         \includegraphics[width=\textwidth]{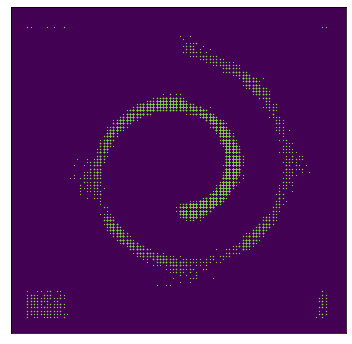}
         \caption*{Ratio-fixed}
     \end{subfigure}
      \begin{subfigure}[b]{0.15\textwidth}
         \centering
         \includegraphics[width=\textwidth]{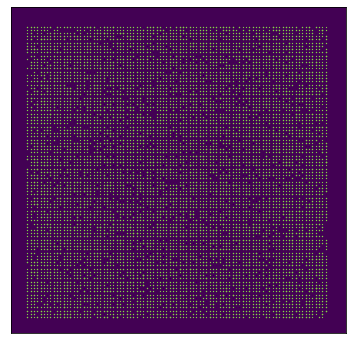}
         \caption*{Marginal}
     \end{subfigure}
     \begin{subfigure}[b]{0.15\textwidth}
         \centering
         \includegraphics[width=\textwidth]{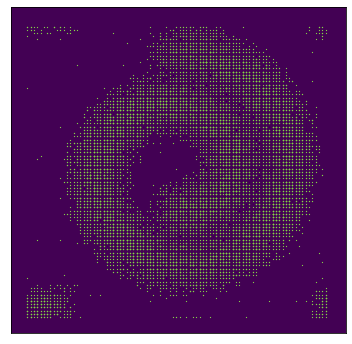}
         \caption*{Marginal-fixed}
     \end{subfigure}
     \begin{subfigure}[b]{0.15\textwidth}
         \centering
         \includegraphics[width=\textwidth]{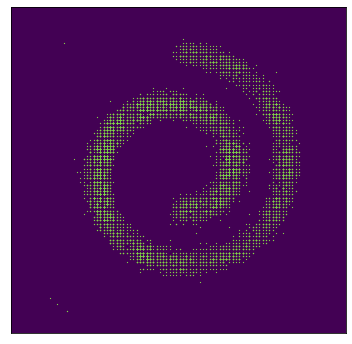}
         \caption*{CSM (Ours)}
     \end{subfigure}
     
     \caption{Sampling results on toy 2-D benchmark datasets. We find that CSM produces the highest quality samples across all 3 datasets relative to all baselines.}
      \label{fig:2d_samples}
      \vspace{-5pt}
\end{figure}

\subsection{Likelihood Evaluation on Discrete Tabular Data}
\label{exp:tabular}
\looseness=-1
We then evaluate the performance of CSM on density estimation tasks for a wide range of tabular (discrete) datasets drawn from both the Twenty Datasets \cite{van2012markov} and the Amazon Baby Registries benchmarks \cite{gillenwater2014expectation}. 
In order to evaluate likelihoods, we directly parameterize the density with a discrete autoregressive model (MADE~\cite{germain2015made}), but train the model using the baseline approaches and CSM. For a fair comparison, we use the same model architecture and experimental configurations across all methods. 
Similar to our previous experiments, we use the grid neighborhood structure for training CSM.
As shown in Table~\ref{tab:tabular_likelihood}, our approach (\texttt{CSM}) demonstrates favorable performance relative to the \texttt{Ratio Matching} and \texttt{Discrete Marginalization} baselines.
We provide {additional experimental results in Appendix~\ref{app:addtl_ll_table} and} more experimental details in \Cref{app:experiment}.

\begin{table}
\vspace{-10pt}
\newcommand{\mround}[1]{\round{#1}{2}}
    \newcommand{\nastar}{\multicolumn{1}{c}{\hspace{15pt}--~*}}
    \newcommand{\na}{\multicolumn{1}{c}{--}}
    \centering %
    {
        \centering %

        \begin{adjustbox}{max width=0.8\linewidth}
        \begin{tabular}{l @{\extracolsep{5pt}}  c c c c @{}}
        \\
        \toprule
        {Datasets}
        & Ratio Matching $(\uparrow)$
        & Discrete Marginalization $(\uparrow)$
        & CSM (Ours) $(\uparrow)$
        \\ 
        \midrule
        NLTCS %
        & -6.15 %
        & -6.21 %
        & \textbf{-6.13} %
        \\
        Plants%
        & -15.44 %
        & -19.03 %
        & \textbf{-14.02} %
        \\
        Jester %
        & -56.49 %
        & -57.06 %
        & \textbf{-54.91} %
        
        \\
        Amazon Diaper
        & \textbf{-10.69} %
        & -42.52 %
        & -11.13 %
        \\
        Amazon Feeding
        & \textbf{-12.09} %
        & -35.96 %
        & -12.65 %
        \\
        Amazon Gifts
        & -4.57 %
        & -4.28 %
        & \textbf{-4.22} %
        \\
        Amazon Media
        & \textbf{-10.22} %
        & -13.77 %
        & -10.30 %
        \\
        Amazon Toys
        & -9.83 %
        & -16.34 %
        & \textbf{-9.30} \\
    \bottomrule
    \end{tabular}
    \end{adjustbox}
}
\caption{
Log-likelihood comparisons on discrete tabular datasets. Higher is better.
We find that CSM demonstrates good performance, almost always outperforming or performing comparably relative to the \texttt{Ratio Matching} and \texttt{Discrete Marginalization} baselines.
}
\label{tab:tabular_likelihood}
\vspace{-15pt}
\end{table}

\subsection{Generative Modeling with High-dimensional Images}
For our final experiment, we demonstrate that we can scale CSM to achieve good performance on complex, high-dimensional binarized image datsets. We experiment with the MNIST \cite{lecun1998mnist} dataset, which has 784 dimensions.
We directly parameterize the Concrete score function with a U-Net~\cite{ronneberger2015u}, and train the model using the CSM based on a grid neighborhood structure. Similar to \cite{song2019generative}, we perturb the image with different levels of discrete (Categorical) noise, and train the models at different noise levels with annealing. 
After the Concrete score model is trained, we sample from the model using Metropolis-Hastings as discussed in \Cref{sec:sampling} and follow a similar sample initialization process as ~\cite{song2019generative}. We provide more details in \Cref{app:experiment}.
We present the samples from our model in ~\Cref{fig:mnist_samples}. We observe that  our CSM approach with Metropolis-Hastings is able to generate samples that look similar to the digit examples in the training set (Figure~\ref{fig:train}). 

\begin{figure}[ht]
     \centering
     \begin{subfigure}[h]{0.3\textwidth}
         \centering
         \includegraphics[width=\textwidth]{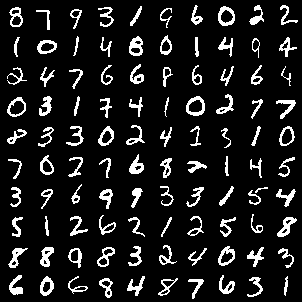}
         \caption{Binary training images}
         \label{fig:train}
     \end{subfigure}
    \hspace{0.05\textwidth}
     \begin{subfigure}[h]{0.3\linewidth}
         \centering
         \includegraphics[width=\textwidth]{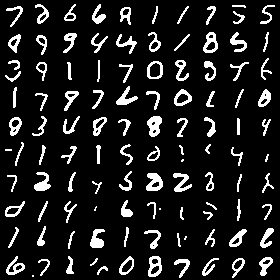}
         \caption{MNIST samples (MH)}
         \label{fig:mnist_samples}
     \end{subfigure}
         \label{fig:fashion_mnist_samples}
        \caption{Image samples from CSM on the binarized MNIST dataset.
        We observe that CSM is able generate high-quality samples on MNIST using Metropolis-Hastings.}
        \label{fig:image_samples}
        \vspace{-5pt}
\end{figure}

\section{Related Work}
\textbf{Score Matching.}
\looseness=-1
Our work builds on the body of literature on score matching \cite{hyvarinen2005estimation,vincent2011connection} and score-based generative modeling \cite{song2020sliced,song2019generative,song2020score}. Notably, we build upon the generalization of score matching to discrete data \cite{hyvarinen2007connections,hyvarinen2007some,lyu2012interpretation}. 
Our score function can be viewed through the lens of generalized score matching as in \cite{lyu2012interpretation}, where the Concrete score serves as a particular instantiation of the linear operator in their framework.
However, we provide a novel perspective in terms of constructing the Concrete score via surrogate gradient information over a structured discrete space, and provide efficient training objectives for CSM.
Our work is also closely related to score matching with finite difference (FD-SM) \cite{pang2020efficient} as discussed in Section~\ref{sec:connection}. However, rather than approximating the gradient with finite differences for continuous data, we leverage the first-order forward difference to construct our Concrete score function in discrete domains.
{In addition, CSM bears similarities to a concurrent work on scaling up Ratio Matching \cite{lyu2012interpretation} for discrete energy-based models (EBMs) \cite{liu2021gradient} called RMwGGIS---our approach can be viewed as a different way to train discrete EBMs (via the Concrete score).
}

\textbf{Generative Modeling for Discrete Data.}
CSM also provides another way to train generative models for discrete data. 
A related work is \cite{niu2020permutation}, which trains a score-based model on distributions over graphs. 
However, they perturb the adjacency matrices with Gaussian noise and use the continuous variant of DSM.
Although there exist several model families for learning binary and Categorical probability distributions such as normalizing flows \cite{ziegler2019latent,tran2019discrete,hoogeboom2021argmax}, Sum-Product Networks (SPNs) \cite{poon2011sum,Molina2019SPFlow}, {denoising diffusion probabilistic models \cite{austin2021structured}}, {discrete (latent) \cite{bao2020bi,bao2021variational} EBMs \cite{liu2021gradient}}, and Generative Flow Networks (GFNs) \cite{zhang2022generative}, there does not yet exist a discrete score-based model that can scale to high-dimensional discrete datasets.
Finally, our approach bears similarities with Gibbs with Gradients (GWG) \cite{grathwohl2021oops}.
In particular, GWG augments existing MCMC samplers with gradient information by leveraging local structure to estimate likelihood ratios for transitioning to the next state.
However, GWG utilizes gradients from the probability mass functions of the underlying discrete distributions, in contrast to the way in which we construct the Concrete score function.

\section{Conclusion}
\label{sec:conc}
We introduced Concrete Score Matching (CSM), a novel framework for learning discrete probability distributions via score matching. 
We proposed to leverage particular kinds of structural information in the data to construct \emph{surrogate gradient information} about the discrete space, and used this to define a valid score function.
We also introduced several modifications to the original training objective that allowed CSM to scale gracefully to high-dimensional datasets, and demonstrated that CSM performs well on a variety of sampling and density estimation tasks.

However, this work is not without limitations. Since CSM depends on the neighborhood structure, certain types of graphs may work better for score matching in practice than others.
Additionally, our efficient training objectives may suffer from high variance when scaling to high dimensions for particular kinds of neighborhood-induced graphs, such as the Star graph.
For future work, although we fixed the number of neighbors $K$ for each $\rvx$ in our experiments, it would be interesting to adaptively determine the optimal number of neighbors to use for each $\rvx$ during training.
We also believe that CSM can be generalized to leverage more complex neighborhood structures than those we explored in the paper.
Additionally, empirically investigating the performance of D-CSM with Langevin dynamics relative to CSM would be interesting future work.

\paragraph{Broader Impact.}
\label{sec:impact}
This work introduces a novel score function---the Concrete score---that is defined over discrete spaces, as well as a corresponding score matching (CSM) framework that can scale to high-dimensional discrete datasets.
This leads to empirical performance improvements over a range of density estimation and sampling tasks, and does not have a direct consequence on societal issues. 
However, we note that CSM could serve as the basis for developing more powerful generative models of structured, discrete data. Although this could lead to tangible benefits (e.g. improved generative modeling of text data), we should be mindful to take the usual precautions required for generative modeling research (e.g. against the development of deepfakes). 

\section*{Acknowledgements and Funding Disclosure}
We thank the anonymous reviewers for insightful discussions and feedback.
KC is supported by the Qualcomm Innovation Fellowship and the Two Sigma Diversity PhD Fellowship.
This research was supported by NSF (\#1651565), AFOSR (FA95501910024), ARO (W911NF-21-1-0125), ONR, DOE, CZ Biohub, and Sloan Fellowship.

\appendix

\bibliography{references}
\newpage
\section*{Checklist}

\begin{enumerate}

\item For all authors...
\begin{enumerate}
  \item Do the main claims made in the abstract and introduction accurately reflect the paper's contributions and scope?
    \answerYes{}
  \item Did you describe the limitations of your work?
    \answerYes{Section~\ref{sec:conc}}
  \item Did you discuss any potential negative societal impacts of your work?
    \answerYes{Section~\ref{sec:impact}}
  \item Have you read the ethics review guidelines and ensured that your paper conforms to them?
    \answerYes{}
\end{enumerate}

\item If you are including theoretical results...
\begin{enumerate}
  \item Did you state the full set of assumptions of all theoretical results?
    \answerYes{Sections~\ref{sec:method}-\ref{sec:practice}}
        \item Did you include complete proofs of all theoretical results?
    \answerYes{Appendix~\ref{app:proof}}
\end{enumerate}

\item If you ran experiments...
\begin{enumerate}
  \item Did you include the code, data, and instructions needed to reproduce the main experimental results (either in the supplemental material or as a URL)?
    \answerNo{We will release the code publicly upon publication.}
  \item Did you specify all the training details (e.g., data splits, hyperparameters, how they were chosen)?
    \answerYes{Section~\ref{sec:experiments} and Appendix~\ref{app:experiment}}
        \item Did you report error bars (e.g., with respect to the random seed after running experiments multiple times)?
    \answerNo{}
        \item Did you include the total amount of compute and the type of resources used (e.g., type of GPUs, internal cluster, or cloud provider)?
    \answerYes{Appendix~\ref{app:experiment}}
\end{enumerate}

\item If you are using existing assets (e.g., code, data, models) or curating/releasing new assets...
\begin{enumerate}
  \item If your work uses existing assets, did you cite the creators?
    \answerYes{Section~\ref{sec:experiments}}
  \item Did you mention the license of the assets?
    \answerNo{We only used publicly available datasets for our experiments.}
  \item Did you include any new assets either in the supplemental material or as a URL?
    \answerNA{}
  \item Did you discuss whether and how consent was obtained from people whose data you're using/curating?
    \answerNA{}
  \item Did you discuss whether the data you are using/curating contains personally identifiable information or offensive content?
    \answerNA{}
\end{enumerate}

\item If you used crowdsourcing or conducted research with human subjects...
\begin{enumerate}
  \item Did you include the full text of instructions given to participants and screenshots, if applicable?
    \answerNA{}
  \item Did you describe any potential participant risks, with links to Institutional Review Board (IRB) approvals, if applicable?
    \answerNA{}
  \item Did you include the estimated hourly wage paid to participants and the total amount spent on participant compensation?
    \answerNA{}
\end{enumerate}

\end{enumerate}

\bibliographystyle{unsrt}
\newpage
\section*{Appendix}

\section{Proofs of Theoretical Results}
\label{app:proof}
In this section, we provide proofs and additional discussions of the theoretical results.

\subsection{Concrete Score Matching}

\completeness*
\begin{proof}
If two nodes $\rvx$ and $\rvx'$ in a neighborhood-induced graph $\cG$ share an edge, then their density ratio $\pdata(\rvx')/\pdata(\rvx)$ and $\pdata(\rvx)/\pdata(\rvx')$ can be uniquely identified using $\cdata(\x; \cN)$ based on the definition:
\begin{equation}
    \cdata(\x; \cN) + \vone = \bigg[ 
    \frac{\pdata(\x_{n_1})}{\pdata(\x)},
    ...,
    \frac{\pdata(\x_{n_k})}{\pdata(\x)}\bigg]^T.
\end{equation}

For simplicity, we name the elements in $\mathcal{X}$: $\{\x_1,\x_2,..., \x_N\}$.
Note that when $\cG$ is a weakly connected graph, any node pair $(\x_1, \x_n)$ is connected via a path in the graph. 
Denoting the path between these two nodes as $\x_1\to...\to \x_n$, we can obtain the density ratio for any neighboring pairs on the path using the definition of $\cdata(\x; \cN)$. Therefore the density ratio $\pdata(\rvx_n)/\pdata(\rvx_{1})$ can be computed using the products of the density ratios for all neighboring data pairs on the path. 
This implies that when $\cG$ is weakly connected, we can uniquely recover the density ratios
\{$\pdata(\rvx_1)/\pdata(\rvx_{1}),
\pdata(\rvx_2)/\pdata(\rvx_{1}), \ldots,  \pdata(\rvx_N)/\pdata(\rvx_{1})\}$, which uniquely determine $\pdata(\x)$.
Therefore, knowing the density ratio between any two points uniquely identifies $\pdata(\x)$.

\end{proof}

\consistency*
\begin{proof}
It is easy to see the optimal $\theta^{*}$ that minimizes the following equation
\begin{align}
    \cL_{\mathrm{CSM}} (\theta)= \sum_{\x} \pdata(\x)\norm {\vc_\theta(\x; \mathcal{N})-\cdata(\x; \cN)}_2^2
\end{align}
satisfies $\vc_{\theta^{*}}(\x; \cN)= \cdata(\x; \mathcal{N})$ almost everywhere.

\end{proof}

\expandedcsm*
\begin{proof}
Recall that by definition
\begin{equation}
    \cdata(\x ;\mathcal{N}) \triangleq \bigg[ 
    \frac{\pdata(\x_{n_1})-\pdata(\x)}{\pdata(\x)},
    ...,
    \frac{\pdata(\x_{n_k})-\pdata(\x)}{\pdata(\x)}\bigg]^T.
\end{equation}
\begin{align*}
&\argmin_{\theta} \cL_{\mathrm{CSM}} (\theta)\\
&=\argmin_{\theta} \sum_{\x} \pdata(\x)\norm {\vc_\theta(\x; \mathcal{N})-\cdata(\x; \cN)}_2^2\\
&=\argmin_{\theta} \sum_{\x}\pdata(\x)\bigg[\norm {\cdata(\x; \cN)}_2^2-2\vc_\theta(\x; \mathcal{N})^T\cdata(\x; \cN)+\norm {\vc_\theta(\x; \mathcal{N})}_2^2\bigg]\\
&=\argmin_{\theta} \sum_{\x}\pdata(\x)\bigg[\norm {\vc_\theta(\x; \mathcal{N})}_2^2-2\vc_\theta(\x; \mathcal{N})^T\cdata(\x; \cN)\bigg]\\
&=\argmin_{\theta} \underbrace{\sum_{\x}\sum_{i=1}^{|\mathcal{N}(\x)|} \pdata(\x) \bigg( \vc_\theta(\x; \cN)^2_i+2\vc_\theta(\x; \cN)_i \bigg)}_{\cJ_{1}} - 
    \underbrace{\sum_{\x}\sum_{i=1}^{|\mathcal{N}(\x)|}2\pdata(\x_{n_i}) \vc_\theta(\x; \cN)_i}_{\cJ_{2}}\\
    &=\argmin_{\theta} \cJ_{\mathrm{CSM}}(\theta)
\end{align*}

\end{proof}
\subsection{Denoising Concrete Score Matching}

\begin{restatable}{property}{propertyone}
\label{prop:1}
$\vc_{\tilde p(\tx)}(\tx; \mathcal{N}) = \sum_{\x} \vc_{\tilde{q}(\tx| \x)}( \tx; \mathcal{N}) {q(\x | \tx)}$
\end{restatable}

\begin{proof}
For simplicity, given a distribution $p(\x)$, we define the operator $L[p(\x)]=\bigg[ {p}(\x_{n_1})-{p}(\x),..., {p}(\x_{n_k})-{p}(\x)\bigg]^T$, where $\cN(\rvx) = \{\x_{n_1},\ldots,\x_{n_k}\}$ are the neighbors of $\x$. 
Recall that by definition, $\tilde{p}(\tx) = \sum_{\x} \pdata(\x) \tilde q( \tx |\x)$ and the posterior $q(\x | \tx) = \frac{\pdata(\x) \tilde{q}( \tx | \x)}{\tilde p(\tx)}$.
We have
\begin{align*}
    c_{\tilde{p}(\tx)}(\tx; \mathcal{N})
    &=\frac{L[\tilde{p}(\tx)]}{\tilde{p}(\tx)}\\
    &= \frac{L[\sum_{\x} \pdata(\x) \tilde{q}( \tx | \x)]}{\tilde{p}(\tx)} \\
    &= \frac{\sum_{\x} \pdata(\x) L[ \tilde{q}( \tx | \x)]}{\tilde{p}(\tx)}\;\text{(by linearity)} \\
    &=\sum_{\x} \frac{ L[ \tilde{q}( \tx | \x)]}{\tilde{q}( \tx | \x)}\frac{\pdata(\x)\tilde{q}( \tx | \x)}{\tilde{p}(\tx)} \\
    &=\sum_{\x} c_{\tilde{q}( \tx | \x)}(\tx; \mathcal{N}) {q}(\x | \tx) 
\end{align*}

\end{proof}

\propertytwo*

\begin{proof}
The model $\vc_\theta$ that minimizes the least squares 
\begin{align}
    \theta^{*} &= \argmin_{\theta} \sum_{\x, \tx} \pdata(\x) \tilde{q}(\tx | \x) || \vc_{\theta}(\tx;\mathcal{N}) - c_{\tilde{q}( \x | \tx)}(\tx;\mathcal{N}) ||^2 
\end{align}
satisfies $\vc_{\theta}(\tx;\mathcal{N})=\sum_{\x}c_{\tilde{q}( \tx | \x)}(\tx;\mathcal{N}){q}( \x | \tx) =c_{\tilde{p}(\tx)}(\tx; \mathcal{N})$ using \Cref{prop:1}.
\end{proof}

\subsection{Connection to distribution perturbed with triangular noise}
\label{app:langevin_noise_perturbation}

As discussed in \Cref{sec:1d_data}, our approach can be used to denoise a given data distribution perturbed with triangular noise. 
For simplicity, we assume $\x \in \mathcal{X} \subseteq \mathbb{Z}^{D}$.
The argument proceeds in a similar manner to how denoising score matching can be used to compute the expected posterior $\mathbb{E}_{\x}[p(\x|\tilde \x)]$, where $\tilde \x$ is the perturbed data.

\begin{definition}[Triangular noise]
We define the PDF of the $D$-dimensional triangular distribution with lower limit -1, upper limit 1, and mode 0 as the following:
\begin{equation}
\label{def:triangular_noise}
\mathcal{T}_D(\x)=
\begin{cases}
0 & \text{$\x<-1$}\\
\x+1 & \text{$-1\le\x\le 0$}\\
1-\x  & \text{$0<\x\le 1$} \\
0 & \text{$\x>1$}
\end{cases}
\end{equation}
\end{definition}
\begin{lemma}
\label{lemma:triangular_noise}
Given a $D$-dimensional discrete distribution  $\pdata(\x)$,
let $\tilde{p}(\tx)$ be the distribution of $\pdata(\x)$ when perturbed with triangular noise as defined in \Cref{def:triangular_noise}.
\begin{align}
    \tilde{p}(\tx) \triangleq \sum_{\x} \pdata(\x) \mathcal{T}_D(\tx-\x).
\end{align}
Then for any $\x, \y \sim \pdata(\x)$ we have
$\frac{\tilde{p}(\x)}{\tilde{p}(\y)}=\frac{{\pdata}(\x)}{\pdata(\y)}.$
\end{lemma}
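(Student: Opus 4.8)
The plan is to exploit the fact that the discrete data lives on the integer lattice $\bbZ^D$ while the triangular noise $\mathcal{T}_D$ is supported on $[-1,1]^D$, so that evaluating the perturbed density $\tilde p$ at a lattice point collapses the defining convolution to a single term. I would read $\mathcal{T}_D$ as the coordinate-wise product of the scalar triangular density of \Cref{def:triangular_noise}, i.e. $\mathcal{T}_D(\bm z) = \prod_{d=1}^D \mathcal{T}(z_d)$, which is the natural $D$-dimensional generalization.

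The one real observation to extract is that, \emph{on integer inputs}, the scalar triangular density acts like a Kronecker delta. Reading off the piecewise formula gives $\mathcal{T}(0) = 1$, $\mathcal{T}(\pm 1) = 0$, and $\mathcal{T}(k)=0$ for every integer $k$ with $|k|\ge 1$, so $\mathcal{T}(k) = \mathds{1}[k=0]$ for $k \in \bbZ$. Consequently, for any two lattice points $\bm m, \x \in \bbZ^D$ the offset $\bm m - \x$ is an integer vector and $\mathcal{T}_D(\bm m - \x) = \prod_{d=1}^D \mathcal{T}(m_d - x_d) = \mathds{1}[\bm m = \x]$. Substituting this into $\tilde p(\bm m) = \sum_{\x}\pdata(\x)\,\mathcal{T}_D(\bm m - \x)$, only the term $\x = \bm m$ survives, giving $\tilde p(\bm m) = \pdata(\bm m)$ for every $\bm m \in \bbZ^D$; that is, restricted to the integer lattice the continuous perturbed density coincides exactly with the discrete distribution.

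Finally, since $\x$ and $\y$ are drawn from $\pdata$ and hence lie in $\bbZ^D$, applying this identity at $\bm m = \x$ and $\bm m = \y$ yields $\tilde p(\x) = \pdata(\x)$ and $\tilde p(\y) = \pdata(\y)$, so $\tilde p(\x)/\tilde p(\y) = \pdata(\x)/\pdata(\y)$ follows at once. There is no substantive obstacle: the argument is essentially immediate once the delta-like behavior is noted, and the only point requiring care is the boundary value $\mathcal{T}(\pm 1) = 0$, which is precisely what guarantees that adjacent lattice points contribute nothing and thus makes the triangular (rather than a wider-support) noise the right choice for this property.
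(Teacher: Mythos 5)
Your proof is correct and follows essentially the same route as the paper's: evaluate $\tilde p$ at lattice points, observe that the defining sum collapses to the single term $\x = \bm m$ because the triangular density vanishes at all nonzero integer offsets, and conclude that $\tilde p(\x) = \pdata(\x)\,\mathcal{T}_D(\mathbf{0})$ so the constant cancels in the ratio. If anything, your version is spelled out more carefully than the paper's one-liner (``because the width of the triangular noise is less than one''), since you make explicit both the product structure $\mathcal{T}_D(\bm z) = \prod_{d}\mathcal{T}(z_d)$ and the crucial boundary value $\mathcal{T}(\pm 1) = 0$.
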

\begin{proof}
It is easy to see that for any $\x,\y \in\mathcal{X}\subseteq \mathbb{Z}^{D}$, we have 
\begin{equation}
    \tilde{p}(\x) = \pdata(\x) \mathcal{T}_D(\textbf{0}), \; \tilde{p}(\y) = \pdata(\y) \mathcal{T}_D(\textbf{0})
\end{equation}
because the width of the triangular noise is less than one.
Thus $\frac{\tilde{p}(\x)}{\tilde{p}(\y)}=\frac{{\pdata}(\x)}{\pdata(\y)}.$

\end{proof}

Denote $\bar{\x}_d$ the integer such that the $d$-th index of $\tx$, denoted $\tx_d$, satisfies $\tilde \x_d \in [\bar{\x}_d, \bar{\x}_d+1)$.
Similar to \Cref{sec:denoising}, in the discrete case with perturbed triangular noise $\mathcal{T}_D(\x)$, we define $\tilde{p}(\tx) = \sum_{\x} \pdata(\x) \mathcal{T}_D ( \tx |\x)$ and the posterior $q(\x | \tx) = \frac{\pdata(\x) \mathcal{T}_D( \tx | \x)}{\tilde p(\tx)}$, where $\mathcal{T}_D( \tx | \x)=\mathcal{T}_D(\tx-\x)$.

We have
\begin{align}
\label{eq:denoise_posterior}
    \mathbb{E}_{\x}[q(\x|\tilde \x)] &= \sum_{\x\in\{\x_d\in \{\bar{\x}_d, \bar{\x}_d+1\}\}}
    \x q(\x|\tilde \x)\\
    &=  \sum_{\x\in\{\x_d\in \{\bar{\x}_d, \bar{\x}_d+1\}\}}
    \x \frac{\pdata(\x) \mathcal{T}_D( \tx | \x)}{p(\tilde \x)}
\end{align}
Using the fact that $\mathcal{T}_D$ is independent among dimensions, we have 
\begin{align}
\label{eq:triangular_joint}
  \pdata(\x) \mathcal{T}_D( \tx | \x) = \pdata(\x) \prod_{d=1}^{D}
  \bigg(\mathds{1}[\x_d=\bar{\x}_d] * (\bar{\x}_d+1-\tx_d) + 
  \mathds{1}[\x_d=\bar{\x}_d+1] * (\tx_d-\bar{\x}_d)
  \bigg)
\end{align}

and
\begin{align}
\label{eq:triangular_posterior}
   \tilde p(\tx) = \sum_{\x\in\{\x_d\in \{\bar{\x}_d, \bar{\x}_d+1\}\}} \pdata(\x) \prod_{d=1}^{D}
   \bigg(\mathds{1}[\x_d=\bar{\x}_d] * (\bar{\x}_d+1-\tx_d) + 
   \mathds{1}[\x_d=\bar{\x}_d+1] * (\tx_d-\bar{\x}_d)
   \bigg)
\end{align}

Plugging in \Cref{eq:triangular_joint} and \Cref{eq:triangular_posterior}, we have
\begin{align}
\label{eq:denoise_posterior_ratio}
   q(\x|\tilde \x)
    &= \frac{\prod_{d=1}^{D}
   \bigg(\mathds{1}[\x_d=\bar{\x}_d] * (\bar{\x}_d+1-\tx_d) + 
   \mathds{1}[\x_d=\bar{\x}_d+1] * (\tx_d-\bar{\x}_d)
   \bigg)}{\sum_{\y\in\{\y_d\in \{\bar{\x}_d, \bar{\x}_d+1\}\}} \frac{\pdata(\y)}{\pdata(\x)} \prod_{d=1}^{D}
   \bigg(\mathds{1}[\y_d=\bar{\x}_d] * (\bar{\x}_d+1-\tx_d) + 
   \mathds{1}[\y_d=\bar{\x}_d+1] * (\tx_d-\bar{\x}_d)
   \bigg)}. 
\end{align}

\Cref{eq:denoise_posterior_ratio} means that given $\tx$, we can compute the posterior in closed form using density ratios $\frac{\pdata(\y)}{\pdata(\x)}$, which can be evaluated using Concrete scores as long as the graph induced by the neighborhood structure is connected (based on \Cref{thm:completeness}).
Knowing $p(\x|\tx)$ also allows us to sample from $p(\x|\tx)$ to perform denoising.
Similarly, given $q(\x|\tilde \x)$, we can also compute \Cref{eq:denoise_posterior} in closed form.

\paragraph{Recovering Stein Scores}
Given Concrete scores, we can uniquely recover the
Stein score of $\tilde{p}(\x)$, denoted $\textbf{s}(\tx)$, using the following equation, where the $d$-th index of $\textbf{s}(\tx)$ is defined as
\begin{align}
    \textbf{s}(\tx)_d&=\frac{\pdata(\bar{\x}_d+1)-\pdata(\bar{\x}_d)}{\pdata(\bar{\x}_d+1)*(\tx_d-\bar{\x}_d)+\pdata(\bar{\x})*(\bar{\x}_d+1-\x_d)}\\
    &=\frac{\frac{\pdata(\bar{\x}_d+1)}{\pdata(\bar{\x}_d)}-1}{\frac{\pdata(\bar{\x}_d+1)}{\pdata(\bar{\x}_d)}*(\tx_d-\bar{\x}_d)+(\bar{\x}_d+1-\x_d)},
\end{align}
where $\x_d\in\mathbb{Z}$ and $\x_d\in[\x_d, \x_d+1)$.
Thus, the Stein score $\textbf{s}(\tx)$ of $\tilde{p}(\x)$ can be constructed using the density ratio from Concrete scores.
Given the recovered Stein score $\textbf{s}(\tx)$, we can perform Langevin dynamics to sample from $\tilde{p}(\x)$, which gives the smoothed sample in \Cref{fig:1d_examples} (smoothed orange). We can then sample from $q(\x|\tx)$ using \Cref{eq:denoise_posterior_ratio} to perform closed-form denoising, which is shown in \Cref{fig:1d_examples} (orange).

\subsection{Connection to Stein Scores}
In this section, we study a special case of the Concrete score, as described in \Cref{prop:proposition1}, to highlight its connection to both the continuous (Stein) score and existing score matching methods.
Given a $D$-dimensional continuous data distribution $p(\x)$ and $\delta>0$, we define a particular neighborhood structure $\mathcal{N}(\rvx)=\{\rvx_{n_i}\}_{i=1}^{D}$ where
$\x_{n_i}=p(\x+\delta \rve_i)$ and $\rve_i$ is the standard (one-hot) basis vector with the $i$-th element $\rve_i = 1$. 
Then:
\begin{equation}
\label{eq:concrete_directional}
    \frac{\vc_\theta(\x,  \mathcal{N})}{\delta}=\frac{1}{p(\x)}\bigg[
    \frac{p(\x+\delta \rve_1)-p(\x)}{\delta },
    \ldots,
    \frac{p(\x+\delta \rve_D)-p(\x)}{\delta }
    \bigg]^T
\end{equation}
From Eq.~\ref{eq:concrete_directional}, we can make two observations. First, we see that $\bigg[
    \frac{p(\x+\delta \rve_1)-p(\x)}{\delta },
    ...,
    \frac{p(\x+\delta \rve_D)-p(\x)}{\delta }
    \bigg]^T$ approximates the directional derivative of $p(\x)$ via a first-order forward difference operation.
    As a result, the scaled Concrete score function $\frac{\vc_\theta(\x, \cN)}{\delta}$ converges to $\vs(\rvx)$ in the limit of $\delta \to 0$:
\begin{align*}
    \lim_{\delta \to 0} \frac{\vc_\theta(\x,  \mathcal{N})}{\delta}=\frac{1}{p(\x)} \lim_{\delta \to 0} \bigg[
    \frac{p(\x+\delta \rve_1)-p(\x)}{\delta },
    \ldots,
    \frac{p(\x+\delta \rve_D)-p(\x)}{\delta }
    \bigg]^T = \frac{\nabla_{\x} p(\x)}{p(\x)} =\nabla_{\x} \log p(\x)
\end{align*}
which is precisely the definition of the Stein score.

\section{Additional Experimental Results}
\label{app:addtl_exp}
\subsection{Likelihood Evaluation on Discrete Tabular Data}
\label{app:addtl_ll_table}
We present additional results from Section~\ref{exp:tabular}, where we trained a MADE model using standard maximum likelihood. 
We use the training setting as detailed in \Cref{app:experiment}.
As shown in Table~\ref{tab:tabular_likelihood_addtl}, this maximum likelihood baseline serves as an upper bound on performance across all score-based methods. 
We note that as in continuous (Stein) score matching, log-likelihoods and score matching losses are not always correlated even though they both theoretically converge to the optimal solution given infinite model capacity~\cite{song2020sliced}. This is due to practical constraints such as model mis-specification or optimization challenges. We believe that this is also the case for discrete score matching, which explains why directly optimizing likelihood outperforms all other approaches in Table~\ref{tab:tabular_likelihood_addtl}. 
{\begin{table}
\vspace{-10pt}
\newcommand{\mround}[1]{\round{#1}{2}}
    \newcommand{\nastar}{\multicolumn{1}{c}{\hspace{15pt}--~*}}
    \newcommand{\na}{\multicolumn{1}{c}{--}}
    \centering %
    {
        \centering %

        \begin{adjustbox}{max width=0.9\linewidth}
        \begin{tabular}{l @{\extracolsep{5pt}}  c c c c @{}}
        \\
        \toprule
        {Datasets}
        & Ratio Matching $(\uparrow)$
        & Discrete Marginalization $(\uparrow)$
        & CSM (Ours) $(\uparrow)$
        &{Log-Likelihood $(\uparrow)$}
        \\ 
        \midrule
        NLTCS %
        & -6.15 %
        & -6.21 %
        & \textbf{-6.13} %
        & {-6.00} %
        \\
        Plants%
        & -15.44 %
        & -19.03 %
        & \textbf{-14.02} %
         & {-12.52} %
        \\
        Jester %
        & -56.49 %
        & -57.06 %
        & \textbf{-54.91} %
         & {-51.77} %
        
        \\
        Amazon Diaper
        & \textbf{-10.69} %
        & -42.52 %
        & -11.13 %
        & {-9.82} %
        \\
        Amazon Feeding
        & \textbf{-12.09} %
        & -35.96 %
        & -12.65 %
         & {-11.29} %
        \\
        Amazon Gifts
        & -4.57 %
        & -4.28 %
        & \textbf{-4.22} %
         & {-3.43} %
        \\
        Amazon Media
        & \textbf{-10.22} %
        & -13.77 %
        & -10.30 %
         & {-7.79} %
        \\
        Amazon Toys
        & -9.83 %
        & -16.34 %
        & \textbf{-9.30} %
        & {-7.71} \\ %
    \bottomrule
    \end{tabular}
    \end{adjustbox}
}
\caption{
Log-likelihood comparisons on discrete discrete tabular datasets. Higher is better.
We find that CSM demonstrates good performance, almost always outperforming or performing comparably relative to the \texttt{Ratio Matching} and \texttt{Discrete Marginalization} baselines.
}
\label{tab:tabular_likelihood_addtl}
\vspace{-5pt}
\end{table}
}
{
\subsection{Neighborhood Structure Specification in Practice}
In theory, our approach can use any neighborhood structure as long as the neighborhood-induced graph is connected (see \Cref{thm:completeness}). 
In practice, the choice of neighborhood structure can affect performance due to challenges in optimization and proper model specification.
We provide an empirical analysis to build intuition on a series of 1-D synthetic datasets, and believe that the same intuition can generalize to higher dimensions.\\

First, we consider the 1-D distribution in \Cref{fig:app_1d_neighbor_high}, where the data distribution does not contain any low density regions (regions with close to zero density). 
We parameterize the 
unnormalized probability distribution with softmax logits,
and train the model using CSM. 
For the training objective, we explore 4 different neighborhood structures (3 different cycles as well as a fully-connected graph) as shown in \Cref{fig:app_1d_neighbor_high}. 
In this setting, we observe that different neighborhood structures yield similar performances as evaluated by log-likelihood (see \Cref{fig:app_1d_neighbor_high}).
\begin{figure}[H]
    \centering
    \includegraphics[width=\linewidth]{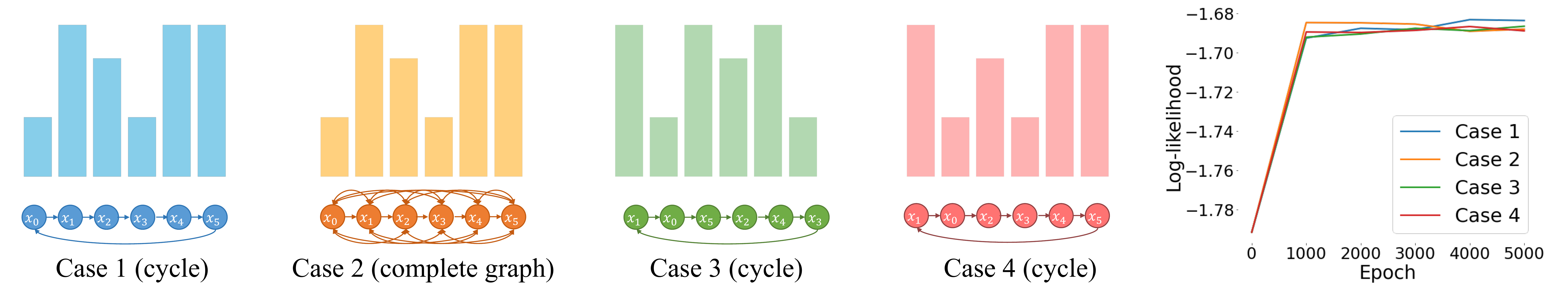}
    \caption{
    {
    Examples of neighborhood structures and their corresponding log-likelihood values (higher is better) when trained with \model{}.}
    }
    \label{fig:app_1d_neighbor_high}
\end{figure}
Next, we consider a different 1-D data distribution which contains low density regions. As shown in \Cref{fig:app_1d_neighbor_low}, we observe that the neighborhood structure in this setting plays a critical role in the final log-likelihoods: the complete graph in Case 2 outperforms all other structures.
Interestingly, when the low-density regions are in between sets of high-density modes (as in Case 3 in \Cref{fig:app_1d_neighbor_low}), we find that the model can still perform comparably relative to Case 2.
\begin{figure}[H]
    \centering
    \includegraphics[width=\linewidth]{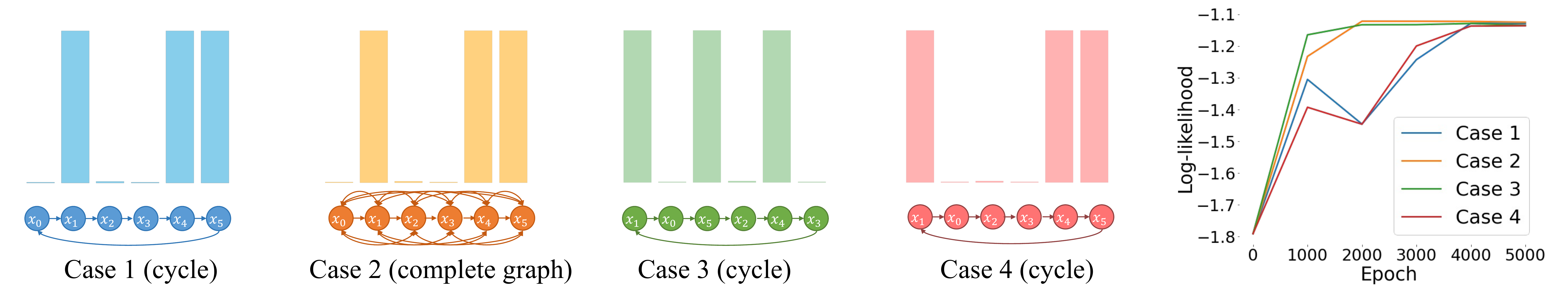}
    \caption{
    {
    Examples of neighborhood structures and their corresponding log-likelihood values (higher is better) when trained with CSM.}
    }
    \label{fig:app_1d_neighbor_low}
\end{figure}
This observation is similar to that of Stein score matching~\cite{song2019generative,hyvarinen2005estimation}. Therefore, we believe that a similar noise annealing procedure followed by denoising CSM may potentially alleviate the effects of poorly chosen neighborhood structures for data distributions with low density regions.
We will leave this investigation for future work.
Such empirical results highlight that the best-performing structure in practice is often data-dependent. That is, a neighborhood graph which respects the particular dataset structure will perform the best out of all possible graph structures (analogous to the way in particular types of inductive biases are useful for solving specific tasks). \\
One interesting avenue for future research would be to draw inspiration from \cite{liu2021gradient} and investigate whether it is possible to construct an ``optimal'' neighborhood structure for a given dataset. In this work, we uniformly sample a set of neighbors from $\cN(\rvx)$ when computing the CSM objective in practice (Algorithms 1 and 2), which can lead to issues with high variance during training. We hypothesize that an optimal proposal distribution over $\cN(\rvx)$ that minimizes variance may lead to insights on what the optimal neighborhood structure should be.
\subsection{Denoising Concrete Score Matching}
\label{app:dcsm_addtl}
We present additional experiments on denoising concrete score matching (D-CSM) as introduced in \Cref{sec:denoising}. In particular, we consider two 2-D toy benchmark datasets as commonly used in the density estimation literature~\cite{grathwohl2018ffjord,de2020block}. 
We quantize the data into $91\times91$ equally-distanced bins to obtain the discrete training data (see \texttt{Clean data} in \Cref{fig:app:extra_denoisng_csm_2d}). Similar to our previous experiments, we use the grid neighborhood structure as detailed in \Cref{2d_multi_path} for training via D-CSM. \\

For the discrete noise distribution $\tilde{q}(\tx | \x)$, we consider the following distribution where each marginal is an independent 1-D Categorical distribution defined as:
\begin{equation}
\tilde{q}(\tx | \x)_i=
\begin{cases}
 w & \text{when $\tx_i=\x_i$}\\
 \frac{1- w}{90} & \text{when $\tx_i\neq\x_i$},\\
\end{cases}
\end{equation}
where $0<w<1$ and $\tilde{q}(\tx | \x)_i$ denotes the $i$-th entry of $\tilde{q}(\tx | \x)$. Note that higher values of $w$ correspond to smaller noise levels. We use the same experimental setting as detailed in \Cref{app:sec:2d_experiments}, where we parameterize the un-normalized probability distribution with softmax logits, but train the model using D-CSM (see \Cref{eq:denoising_csm}).\\

We present the results in \Cref{fig:app:extra_denoisng_csm_2d}. We observe that samples from our model (\textcolor{green}{green}) matches samples from the ground truth noisy data distributions $\tilde p(\tx)$ (\textcolor{blue}{blue}), indicating the practical effectiveness of D-CSM. We leave a more in-depth exploration of the D-CSM approach for future work.
\begin{figure}[H]
    \centering
    \includegraphics[width=\linewidth]{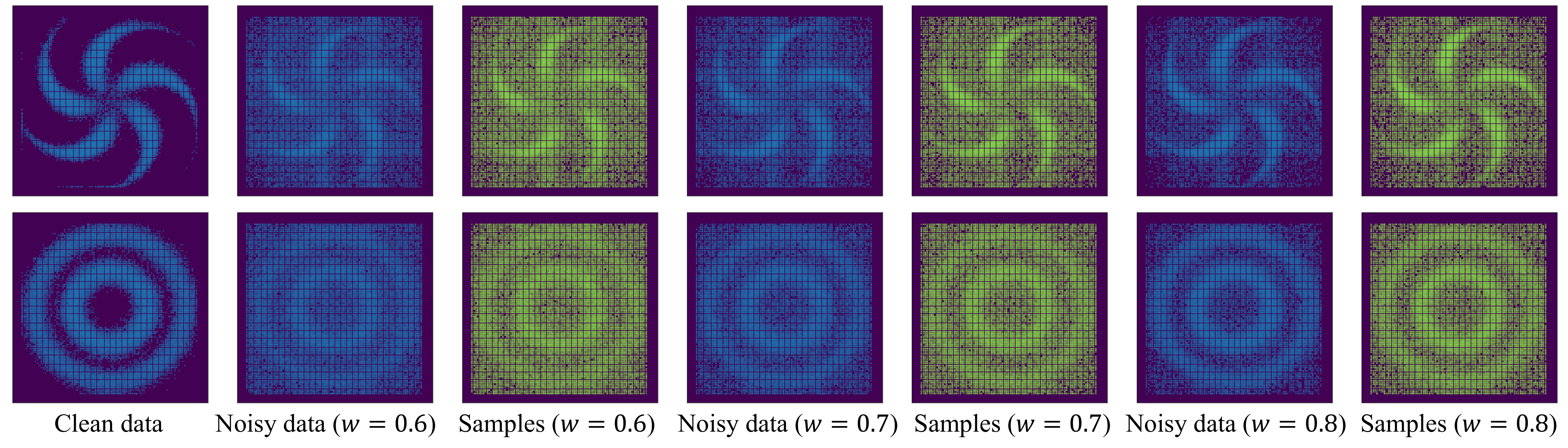}
    \caption{
    {Samples from models trained with D-CSM. The blue samples are the ground truth samples from $\tilde{p}(\tx)$ obtained by perturbing the clean data distribution with $\tilde{q}(\tx | \x)$. As we increase $w$, the variance of the noise distribution $\tilde{q}(\tx | \x)$ decreases and the perturbed data distribution $\tilde{p}(\tx)$ becomes less noisy. The samples from our models (\textcolor{green}{green}) match the ground truth perturbed data samples (\textcolor{blue}{blue}) across different values of $w$, demonstrating the effectiveness of D-CSM.}
    }
    \label{fig:app:extra_denoisng_csm_2d}
\end{figure}
}

\section{Experimental Details}
\label{app:experiment}
In this section, we provide more details for each of the experimental settings. We will release our code upon publication.

\subsection{1-D Discrete Data}
We discuss the experimental setting for \Cref{sec:1d_data}. 
We use a 16-category 1-D data distribution with class labels from 0 to 15, and normalize the data to $[0,1]$ before feeding it into the model. 
We use a 3-layer shallow MLP for the score network with 100 hidden units and Tanh activations. 
We use the model to directly parameterize the Concrete score and define the neighborhood structure to be a Cycle structure as shown in \Cref{fig:graph_simple}. 
We train the model using the Adam optimizer with learning rate $0.001$ for 100,000 iterations. The Langevin samples and denoised samples in \Cref{fig:1d_examples} are obtained using the approach detailed in \Cref{app:langevin_noise_perturbation}.

\subsection{2-D Multi-Class Datasets}
\label{app:sec:2d_experiments}
For the 2-D experiments, we consider three 2-D toy benchmark datasets with multiple modes and discontinuities as commonly used in the density estimation literature \cite{grathwohl2018ffjord,de2020block}. 
We quantize the data into $91\times91$ equally-distanced bins to obtain the discrete training  data (see \Cref{fig:2d_samples}). We use the grid neighborhood structure for training via CSM (see \Cref{fig:graph_simple}). 
For a fair comparison, we use the same model architecture and training configurations across all methods. We directly parameterize the unnormalized probability distribution with softmax logits, but train the model using the baselines and CSM. Since there are  $91\times91$ possible classes, our model directly parameterizes an unnormalized distribution with $91\times91$ possible values. 
We initialize the model to have a uniform probability across classes and train the model using the Adam optimizer with learning rate 0.0005. To sample from the model, we can either use likelihood sampling (by normalizing the model) or Metropolis-Hastings. We empirically observe that both approaches work well.

\subsection{Discrete Tabular Data}
For the tabular experiments, 
we consider tabular (discrete) datasets drawn from both the Twenty Datasets \cite{van2012markov} and the Amazon Baby Registries benchmarks \cite{gillenwater2014expectation}. 
In order to evaluate likelihoods, we directly parameterize the probability distribution with a discrete autoregressive model (MADE~\cite{germain2015made}), but train the model using the baseline approaches and CSM.
The MADE model uses 2 residual blocks with latent dimension 100 and Tanh activations. We train the model for 50000 iterations using the Adam optimizer with learning rate 0.0005.
We use a grid neighborhood structure for CSM (see \Cref{fig:graph_simple}).
For a fair comparison, we use the same model architecture and experimental configurations across all methods. 
Similar to our previous experiments, we use the grid neighborhood structure for training via CSM (see \Cref{fig:graph_simple}).

\subsection{Discrete Image Data}
We experiment with the binarized MNIST \cite{lecun1998mnist} dataset, which has 784 dimensions.
We perturb the images $\x$ with binarized Gaussian noise $q(\tx|\x)=\mathcal{N}(\tx|\x,\sigma_i^2I)$ ($i=1,...,7$). Specifically, given a sample from the noisy data distribution $\tx \sim q(\tx|\x)$, we discretize $\tx$ by mapping it to its nearest integer neighbor, 
and then take the modula by 2 to map $\tx$ into binary bins. We note that this process is similar to applying Categorical noise. 
In our experiment, we use $\{\sigma_1,\sigma_2,\sigma_3, \sigma_4, \sigma_5, \sigma_6, \sigma_7\}=\{0.1, 0.2, 0.25, 0.3, 0.4, 0.5, 0.65\}$. 
We normalize the data to lie in between $[-1,1]$ before feeding it into the model.
We directly parameterize the Concrete score function with a U-Net~\cite{ronneberger2015u}, and train the model using CSM based on a grid neighborhood structure. 
We train the models until convergence with the Adam optimizer using learning rate 0.0002.

We use Metropolis-Hastings for sampling from the model after training: in particular, we initialize the sample to be drawn from uniform binary noise then use the model corresponding to noise level $\sigma_7$ to perform Metropolis-Hastings updates. 
After that, we use the output from the model to initialize the input for sampling from the model with noise level $\sigma_6$. We then repeat the procedure and initialize the model corresponding to the previous noise level with the output from the next noise level---a process similar to \cite{song2019generative}. We observe that each noise level requires around 100-800 steps to obtain reasonable results.

\subsection{Training for Special Neighborhood Structures}
\label{app:train_neighborhood}

\begin{figure}[ht]
\centering
\includegraphics[width=0.5\linewidth]{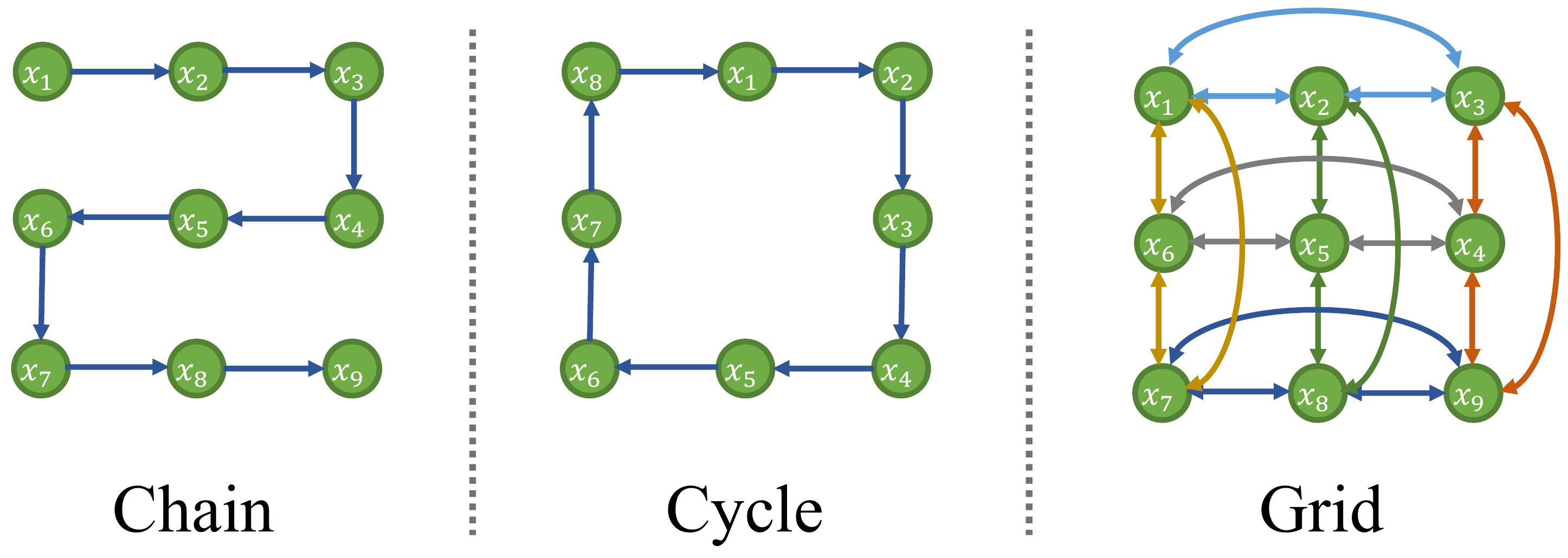}
\caption{Special types of neighborhood structures considered in the paper. }
\label{fig:graph_simple}
\end{figure}

In this section, we provide additional details on special neighborhood structures where we can design even more efficient training procedures than using \Cref{alg:term1} and \Cref{alg:term2}.
For simplicity, we name the elements in $\mathcal{X}$: $\{\x_1,\x_2,..., \x_N\}$.

\paragraph{Chain.}
When the neighborhood-induced graph is a Chain (\Cref{fig:graph_simple}), the second term $\cJ_2$ in \Cref{eq:discrete_sm_simplified} simplifies to the following via reparameterization:
\begin{align}
    \sum_{k=1}^{N-1} 2\pdata(\x_{k+1}) \vc_\theta(\x_k; \mathcal{N})_1
    =\sum_{k=2}^{N} 2\pdata(\x_{k}) \vc_\theta(\x_{k-1}; \mathcal{N})_1.
\end{align}
Note that the subscript of $\vc_\theta$ is always one since each node has at most one neighbor.

\paragraph{Cycle.}
Similarly, when the neighborhood-induced graph is a Cycle (\Cref{fig:graph_simple}), the second term $\cJ_2$ becomes:
\begin{align}
    \sum_{k=1}^{N} 2\pdata(\x_{k+1}) \vc_\theta(\x_k; \mathcal{N})_1
    =\sum_{k=1}^{N} 2\pdata(\x_{k}) \vc_\theta(\x_{(k-1\text{ mod}N)}; \mathcal{N})_1
\end{align}
Thus for both Chains and Cycles, the term $\cJ_2$ in \Cref{eq:discrete_sm_simplified} can be evaluated efficiently using samples from $\pdata(\rvx)$.

\paragraph{Grid.}
When the neighborhood-induced graph is a Grid (\Cref{fig:graph_simple}), the second term $\cJ_2$ can be approximated efficiently by decomposing each dimension into smaller Cycles. Specifically, to estimate $\cJ_2$, we can uniformly sample a dimension $d$, and then use the same reparameterization approach used for the Cycle structure at each dimension.

\newpage
\section{Additional Related Works and Corrections}
\label{app:addtl_works}
In this section, we provide additional details about the baseline methods: in particular, we found that the original equations in \cite{lyu2012interpretation} were incorrect. We elaborate upon them below.

\subsection{Ratio Matching with Discrete Data}
We assume with slight abuse of notation that $\cX \in \{0,1\}^D$, and $\pdata(\rvx)$ is the unknown discrete data distribution.
Ratio matching \cite{hyvarinen2007connections,lyu2012interpretation} was originally proposed as an alternative approach to score matching for learning discrete binary probability distributions from samples. 
Similar to score matching, it leverages the fact that ratios of probabilities are also independent of the intractable normalizing constant (due to cancellation), and seeks to match the ground truth density ratios $\frac{\pdata(\rvx)}{\pdata(\rvx_{-d})} = \frac{q_\theta(\rvx)}{q_\theta(\rvx_{-d})}$ where $\rvx_{-d}$ denotes the vector $\rvx$ with the $d$th entry bit-flipped (e.g. from 0 to 1). 
Concretely, ratio matching minimizes the following objective: 
\begin{align}
\label{eq:rm_objective}
\resizebox{\linewidth}{!}{
    $\mathcal{J}_{RM}(\theta) = \frac{1}{2}\bbE_{\pdata(\x)}\bigg[ \sum_{d=1}^D \bigg( g \bigg(\frac{\pdata(\rvx)}{\pdata(\rvx_{-d})} \bigg) - g \bigg(\frac{q_\theta(\rvx)}{q_\theta(\rvx_{-d})} \bigg) \bigg)^2 + \bigg( g \bigg(\frac{\pdata(\rvx_{-d})}{\pdata(\rvx)} \bigg) - g \bigg(\frac{q_\theta(\rvx_{-d})}{q_\theta(\rvx)} \bigg) \bigg)^2 \bigg] $
}
\end{align}
where $g(z) = \frac{1}{1+z}$ for $z \in \bbR^+$ is a nonlinear transformation of the ratios for numerical stability. The symmetrized objective tries to minimize the squared distance between the two density ratios.

As the original ratio matching approach~\cite{hyvarinen2007connections,lyu2012interpretation} is proposed for binary data, \cite{lyu2012interpretation} extends ratio matching to multi-class data. 
Following the notation in \cite{lyu2012interpretation}, let $\x^{\backslash i}$ denote the vector formed by dropping the $i$-th element (which we call $\x_i$) from $\x$.
To make things concrete, this means that $p(\rvx) = p(\x^{\backslash i}, \x_i)$.
We also let $q_\theta(\xi_i|\x^{\backslash i})$ denote the conditional probability for the $i$-th index of $\x$ taking the value $\xi_i$, and let $q_\theta(\sim\xi_i|\x^{\backslash i})$ denote the conditional probability for the $i$-th index of $\x$ not taking the value $\xi_i$. 

The generalized ratio matching objective proposed in \cite{lyu2012interpretation} is:
\begin{align}
    \theta^{*}&=\argmin_{\theta}\sum_{\x}\pdata(\x)\sum_{i=1}^{d}\sum_{\xi_i}\bigg[g\bigg(\frac{p(\xi_i,\x^{\backslash i})}{p(\sim \xi_i,\x^{\backslash i})}\bigg)-g\bigg(\frac{q_\theta(\xi_i,\x^{\backslash i})}{q_{\theta}(\sim \xi_i,\x^{\backslash i})}\bigg)\bigg]^2
\end{align}
which can be simplified as:
\begin{equation}
\label{eq:wrong_rm}
    \theta^{*}= \argmin_{\theta} \sum_{\x}\pdata(\x)\sum_{i=1}^{d}\sum_{\xi_i}(1-q_\theta(\xi_i|\x^{\backslash i}))^2.
\end{equation}
It is easy to see that the optimal $\theta^{*}$ for \Cref{eq:wrong_rm} can be achieved independent of $\pdata(\x)$, which is problematic. We also empirically show that model trained with \Cref{eq:wrong_rm} cannot learn meaningful features based on the data (see \texttt{Ratio} in \Cref{fig:2d_samples}).

As an alternative, we provide a corrected multi-class ratio matching objective. Similar to \cite{hyvarinen2007connections,hyvarinen2007some}, we assume that all the probabilities are non-zero.
For simplicity, we use $p(\x)$ to denote $\pdata(\x)$.
The objective for multi-class ratio matching can achieved by optimizing:
\begin{align}
\label{eq:ratio_matching_fixed}
    \theta^{*}&=\argmin_{\theta}\sum_{\x}p(\x)\sum_{i=1}^{d}\sum_{\xi_i}\bigg[g\bigg(\frac{p(\xi_i,\x^{\backslash i})}{p(\sim \xi_i,\x^{\backslash i})}\bigg)-g\bigg(\frac{q_\theta(\xi_i,\x^{\backslash i})}{q_{\theta}(\sim \xi_i,\x^{\backslash i})}\bigg)\bigg]^2\\
    &=\argmin_{\theta}\sum_{\x}p(\x)\sum_{i=1}^{d}\bigg[(1-q_\theta(\x_i|\x^{\backslash i}))^2+\sum_{\xi_i\neq \x_i}q_\theta(\xi_i|\x^{\backslash i})^2\bigg].
\end{align}

\begin{proof}
\begin{align*}
       \theta^{*}&=\argmin_{\theta}\sum_{\x}p(\x)\sum_{i=1}^{d}\sum_{\xi_i}\bigg[g\bigg(\frac{p(\xi_i,\x^{\backslash i})}{p(\sim \xi_i,\x^{\backslash i})}\bigg)-g\bigg(\frac{q_\theta(\xi_i,\x^{\backslash i})}{q_{\theta}(\sim \xi_i,\x^{\backslash i})}\bigg)\bigg]^2\\
    &=\argmin_{\theta}\sum_{\x}p(\x)\sum_{i=1}^{d}\sum_{\xi_i}\bigg[p(\sim \xi_i|\x^{\backslash i})-q_\theta(\sim\xi_i|\x^{\backslash i})\bigg]^2\\
    &=\argmin_{\theta}\sum_{\x}p(\x)\sum_{i=1}^{d}\sum_{\xi_i}\bigg[p(\xi_i|\x^{\backslash i})-q_\theta(\xi_i|\x^{\backslash i})\bigg]^2\\
    &=\argmin_{\theta}\sum_{\x}p(\x)\sum_{i=1}^{d}\sum_{\xi_i}\bigg[q^2_\theta(\xi_i|\x^{\backslash i})-2p( \xi_i|\x^{\backslash i})q_\theta(\xi_i|\x^{\backslash i})\bigg]^2\\
    &=\argmin_{\theta}\sum_{\x}p(\x)\sum_{i=1}^{d}\sum_{\xi_i}q^2_\theta(\xi_i|\x^{\backslash i}) -\sum_{\x}p(\x)\sum_{i=1}^{d}\sum_{\xi_i}2p( \xi_i|\x^{\backslash i})q_\theta(\xi_i|\x^{\backslash i}) \\
    &=\argmin_{\theta}\sum_{\x}p(\x)\sum_{i=1}^{d}\sum_{\xi_i}q^2_\theta(\xi_i|\x^{\backslash i}) -\sum_{i=1}^{d}\sum_{\x^{\backslash i}, \x_i}p(\x)\sum_{\xi_i}2p( \xi_i|\x^{\backslash i})q_\theta(\xi_i|\x^{\backslash i}) \\
    &=\argmin_{\theta}\sum_{\x}p(\x)\sum_{i=1}^{d}\sum_{\xi_i}q^2_\theta(\xi_i|\x^{\backslash i}) -\sum_{i=1}^{d}\sum_{\x^{\backslash i}, \xi_i}2p( \xi_i,\x^{\backslash i})q_\theta(\xi_i|\x^{\backslash i})\\
    &=\argmin_{\theta}\sum_{\x}p(\x)\sum_{i=1}^{d}\sum_{\xi_i}q^2_\theta(\xi_i|\x^{\backslash i}) -\sum_{i=1}^{d}\sum_{\x}2p( \x)q_\theta(\x_i|\x^{\backslash i}) \\
    &=\argmin_{\theta}\sum_{\x}p(\x)\sum_{i=1}^{d}\frac{\sum_{\xi_i}q^2_\theta(\xi_i,\x^{\backslash i}) +2q_\theta(\x_i,\x^{\backslash i})q_{\theta}(\x^{\backslash i})}{q^2_{\theta}(\x^{\backslash i})}\\
    &=\argmin_{\theta}\sum_{\x}p(\x)\sum_{i=1}^{d}\frac{-q^2_{\theta}(\x^{\backslash i})+ \sum_{\xi_i\neq\x_i}q^2_\theta(\xi_i,\x^{\backslash i}) +(\sum_{\xi_i\neq\x_i} q_\theta(\xi_i,\x^{\backslash i}))^2}{q^2_{\theta}(\x^{\backslash i})}\\
    &=\argmin_{\theta}\sum_{\x}p(\x)\sum_{i=1}^{d}\frac{ \sum_{\xi_i\neq\x_i}q^2_\theta(\xi_i,\x^{\backslash i}) +(q_\theta(\x^{\backslash i})-q_\theta(\x_i,\x^{\backslash i}))^2}{q^2_{\theta}(\x^{\backslash i})}\\
    &=\argmin_{\theta}\sum_{\x}p(\x)\sum_{i=1}^{d}\bigg[(1-q_\theta(\x_i|\x^{\backslash i}))^2+\sum_{\xi_i\neq\x_i}q^2_\theta(\xi_i|\x^{\backslash i}) \bigg]. 
\end{align*}

\end{proof}

We provide the samples trained with \Cref{eq:ratio_matching_fixed} in \Cref{fig:2d_samples} (\texttt{Ratio-fixed}). We observe that the samples appear to be more reasonable then those from a model trained with \Cref{eq:wrong_rm} (\texttt{Ratio}).

\subsection{Discrete Marginalization}
Given a multi-class discrete data distribution $\pdata(\x)$, discrete marginalization~\cite{lyu2012interpretation} proposes to learn the data distribution by minimizing
\begin{align}
\label{eq:discrete_marginalization}
    \sum_{\x}\pdata(\x)\sum_{i=1}^{d}\bigg[\bigg(\frac{\mathcal{M}_i p}{p}\bigg)^2+\bigg(\frac{\mathcal{M}_{i} q_{\theta}}{q_{\theta}}\bigg)^2-2\mathcal{M}_i\bigg(\frac{\mathcal{M}_i q_{\theta}}{q_{\theta}}\bigg)\bigg],
\end{align}
where $\mathcal{M}_i$ is the marginalization operator defined as
$\mathcal{M}_{i}: \mathcal{F}^1 \to\mathcal{F}^{1}: f(\x)\to \int_{\x_i}f(\x)d\x_i$ or $\sum_{\x_i}f(\x)$ in the discrete case. Thus $\frac{\mathcal{M}_i  p}{p}=\frac{1}{\pdata(\x_i|\x^{\backslash i})}$ and $\frac{\mathcal{M}_i  q_{\theta}}{q_{\theta}}=\frac{1}{q_{\theta}(\x_i|\x^{\backslash i})}$.

The authors further simplified \Cref{eq:discrete_marginalization} to
\begin{align}
\label{eq:discrete_marginalization_simplified}
    \theta^{*}=\argmin_{\theta}\sum_{\x}\pdata(\x)\sum_{i=1}^{d}\sum_{\xi_i}\frac{1-2q_{\theta}(\xi_i|\x^{\backslash i})}{q_{\theta}^2(\xi_i|\x^{\backslash i})}.
\end{align}
However, we observe that this is a typo in \Cref{eq:discrete_marginalization_simplified}---the optimal $\theta^{*}$ can be achieved while being independent of $\pdata(\x)$. In our experiments, we also observe that model trained with \Cref{eq:discrete_marginalization_simplified} cannot learn meaningful representations of the data (see \texttt{Marginal} in \Cref{fig:2d_samples}).

In the following, we provide the corrected simplified version of \Cref{eq:discrete_marginalization}.
\begin{theorem}
Optimizing \Cref{eq:discrete_marginalization} is equivalent to optimizing
\begin{align}
\label{eq:discrete_marginalization_fixed}
     \theta^{*}=\argmin_{\theta}\sum_{\x}\pdata(\x)\sum_{i=1}^{d}\bigg[\frac{1}{q_{\theta}^2(\x_i|\x^{\backslash i})}-\sum_{\xi_i}\frac{2}{q_{\theta}(\xi_i|\x^{\backslash i})}\bigg].
\end{align}
\end{theorem}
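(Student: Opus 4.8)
The plan is to establish the equivalence by a direct, short computation: discard the single $\theta$-independent term, substitute the two conditional-probability identities supplied just before the statement, and then track the action of the marginalization operator $\mathcal{M}_i$ with care. Because the manipulation is purely algebraic once the operator is correctly interpreted, no auxiliary lemma is needed.

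First I would drop the term $(\mathcal{M}_i p/p)^2$ from \Cref{eq:discrete_marginalization}: it is independent of $\theta$ and therefore cannot change the minimizer. This leaves
\[
\theta^{*}=\argmin_{\theta}\sum_{\x}\pdata(\x)\sum_{i=1}^{d}\left[\left(\frac{\mathcal{M}_i q_{\theta}}{q_{\theta}}\right)^2-2\,\mathcal{M}_i\!\left(\frac{\mathcal{M}_i q_{\theta}}{q_{\theta}}\right)\right].
\]
Next I would insert the identity $\mathcal{M}_i q_{\theta}/q_{\theta}=1/q_{\theta}(\x_i\mid\x^{\backslash i})$ into both terms.

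The crux of the argument, and the step most prone to error, is evaluating $\mathcal{M}_i$ applied to the function $\x\mapsto 1/q_{\theta}(\x_i\mid\x^{\backslash i})$. By definition $\mathcal{M}_i$ sums over the $i$-th coordinate, returning a function of $\x^{\backslash i}$ alone; thus $\mathcal{M}_i\!\left(1/q_{\theta}(\x_i\mid\x^{\backslash i})\right)=\sum_{\xi_i}1/q_{\theta}(\xi_i\mid\x^{\backslash i})$. The decisive observation is that this summation is introduced \emph{only} by the outer $\mathcal{M}_i$ in the second term; the squared term $(\mathcal{M}_i q_{\theta}/q_{\theta})^2=1/q_{\theta}^2(\x_i\mid\x^{\backslash i})$ is evaluated at the data-specified value $\x_i$ fixed by the outer sum $\sum_{\x}\pdata(\x)$, and therefore carries no sum over $\xi_i$. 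Collecting the two contributions yields exactly \Cref{eq:discrete_marginalization_fixed}.

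I expect the genuine difficulty to be conceptual rather than computational: it lies precisely in resisting the temptation to pull a single $\sum_{\xi_i}$ across both terms. Doing so produces the incorrect \Cref{eq:discrete_marginalization_simplified}, whose summand depends only on $\x^{\backslash i}$ and hence admits a minimizer independent of $\pdata$. Keeping the asymmetry --- the first term anchored at the observed $\x_i$, the second genuinely marginalized --- is what makes the corrected objective a faithful surrogate, and I would flag this contrast explicitly as the payoff of the proof.
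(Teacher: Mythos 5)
Your proof is correct and follows essentially the same route as the paper: discard the $\theta$-independent $(\mathcal{M}_i p/p)^2$ term, substitute $\mathcal{M}_i q_\theta/q_\theta = 1/q_\theta(\x_i|\x^{\backslash i})$ into what remains, and unfold the outer $\mathcal{M}_i$ on the cross term into $\sum_{\xi_i}$ (the paper attributes this last equality to Lemma~4 of \cite{lyu2012interpretation}, but it is the same definitional computation you carry out directly). Your explicit observation that only the cross term acquires the $\sum_{\xi_i}$ while the squared term stays anchored at the observed $\x_i$ is exactly the asymmetry whose loss produced the erroneous \Cref{eq:discrete_marginalization_simplified}, so no gap remains.
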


\begin{proof}
\begin{align}
 \theta^{*}&=\argmin_{\theta}\sum_{\x}\pdata(\x)\sum_{i=1}^{d}\bigg[\bigg(\frac{\mathcal{M}_i p}{p}\bigg)^2+\bigg(\frac{\mathcal{M}_{i} q_{\theta}}{q_{\theta}}\bigg)^2-2\mathcal{M}_i\bigg(\frac{\mathcal{M}_i q_{\theta}}{q_{\theta}}\bigg)\bigg]\\
 &=\argmin_{\theta}\sum_{\x}\pdata(\x)\sum_{i=1}^{d}\bigg[\bigg(\frac{\mathcal{M}_{i} q_{\theta}}{q_{\theta}}\bigg)^2-2\mathcal{M}_i\bigg(\frac{\mathcal{M}_i q_{\theta}}{q_{\theta}}\bigg)\bigg]\\
     &=\argmin_{\theta}\sum_{\x}\pdata(\x)\sum_{i=1}^{d}\bigg[\frac{1}{q_{\theta}^2(\x_i|\x^{\backslash i})}-\sum_{\xi_i}\frac{2}{q_{\theta}(\xi_i|\x^{\backslash i})}\bigg].
\end{align}
In the last equation, we used the fact that $\sum_{\x}\pdata(\x) \mathcal{M}_i(\frac{\mathcal{M}_i q_{\theta}}{q_{\theta}})=\sum_{\x}\pdata(\x)\sum_{\xi_i}(\frac{\mathcal{M}_i q_{\theta}}{q_{\theta}})=\sum_{\x}\pdata(\x)\sum_{\xi_i}\frac{1}{q_{\theta}(\xi_i|\x^{\backslash i})}$, which directly follows from Lemma 4 in \cite{lyu2012interpretation}.

\end{proof}

We provide the samples trained with \Cref{eq:discrete_marginalization_fixed} in \Cref{fig:2d_samples} (\texttt{Marginal-fixed}). We observe that the samples appear to be more reasonable then those from a model trained with \Cref{eq:discrete_marginalization_simplified} (\texttt{Marginal}).

\end{document}